%% file: main_prime.tex
\documentclass{article}

\usepackage{PRIMEarxiv}
\usepackage[utf8]{inputenc}
\usepackage[T1]{fontenc}
\usepackage{amsmath,amssymb,amsfonts,amsthm,mathtools}
\usepackage{algorithm}
\usepackage{algpseudocode}
\usepackage{array,booktabs}
\usepackage{enumitem}
\usepackage{graphicx}
\usepackage{caption}
\usepackage{microtype}
\usepackage{nicefrac}
\usepackage{url}
\usepackage{xcolor}
\usepackage{hyperref}
\graphicspath{{./}{media/}}

\numberwithin{equation}{section}
\setlist[itemize]{nosep,leftmargin=*,topsep=0pt,partopsep=0pt}
\setlist[enumerate]{leftmargin=*}

\newtheorem{theorem}{Theorem}[section]
\newtheorem{lemma}[theorem]{Lemma}
\newtheorem{proposition}[theorem]{Proposition}
\newtheorem{corollary}[theorem]{Corollary}

\theoremstyle{definition}
\newtheorem{definition}[theorem]{Definition}

\newtheorem{example}[theorem]{Example}

\theoremstyle{remark}
\newtheorem{remark}[theorem]{Remark}

\title{Exit--and--Join Dynamics for Decentralized Coalition Formation}
\author{
  Quanyan Zhu \\
  Department of Electrical and Computer Engineering \\
  New York University Tandon School of Engineering \\
  Brooklyn, NY, USA \\
  \texttt{quanyan.zhu@nyu.edu}
}

\begin{document}
\maketitle

\begin{abstract}
This paper studies coalition formation as a decentralized dynamical process
driven by unilateral exit--and--join decisions. Agents evaluate local moves using
the Aumann--Dr\`eze value, so payoffs are computed within the agent's current
coalition rather than through a globally negotiated coalition structure. The
resulting model links cooperative payoff allocation with noncooperative
best-response behavior: a terminal partition is precisely a coalition structure
with no admissible, individually profitable exit--and--join deviation. We
establish equilibrium characterizations, identify conditions under which the
dynamics admit scalar Lyapunov or exact-potential representations, and analyze
how switching and acceptance costs shape local stability. Numerical experiments
test finite-time stabilization, cost sensitivity, and a special convex-game
benchmark.
\end{abstract}

\keywords{coalition formation \and Aumann--Dr\`eze value \and exit--and--join dynamics \and potential games \and Lyapunov analysis \and multi-agent systems}

\input{join-exit}

\bibliographystyle{abbrv}
\bibliography{refs}

\end{document}

%% file: join-exit.tex
\section{Introduction}

In many economic, social, and socio--technical systems, agents
continuously move across organizational boundaries. Individuals leave existing
groups, join new ones, or form new organizations altogether. These transitions
are not orchestrated by a central authority; rather, they arise from a large
number of decentralized decisions made by self-interested agents. As a result,
coalition structures are not designed ex ante, but emerge endogenously from
ongoing exit--and--join behavior, as in endogenous and dynamic coalition
formation models
\cite{HartKurz1983,RayVohra1999,konishi2003coalition,ZhuHan2026SplitMerge}.
Despite this reality, much of the existing literature models coalition formation
as a top--down process. Classical split--and--merge algorithms, hierarchical
clustering methods, and centralized optimization approaches typically assume
global information, coordinated decision making, or a planner capable of
evaluating alternative coalition structures at the system level
\cite{SandholmEtAl1999,ChalkiadakisElkindWooldridge2011}. While such models are
analytically convenient, they obscure the distributed nature of coalition
formation observed in practice.

This paper adopts a fundamentally different perspective. We study coalition
formation as a decentralized dynamical process driven by local exit--and--join
decisions. Each agent evaluates whether to remain in its current coalition, join
another coalition, or form a singleton based solely on local payoff comparisons.
No agent requires knowledge of the global coalition structure, and no centralized
coordination or negotiation is assumed. Coalition structures therefore arise as
the outcome of distributed individual decisions rather than as the solution to a
centralized design problem. This local-decision viewpoint is close in spirit to
hedonic coalition formation, where each agent evaluates coalitions through the
members of its own group \cite{DrezeGreenberg1980,BogomolnaiaJackson2002}.
Extensions to settings with limited information, for example through learning or
belief updating, fit naturally within this framework and are left for future
work \cite{chalkiadakis2004bayesian,hamed2023distributed}.

Within this decentralized process, coalitions may dissolve when they no longer
generate sufficient value for their members, while new coalitions may emerge as
agents split off and reorganize. The resulting patterns of formation, dissolution,
and reconfiguration closely mirror those observed in firms, alliances, online
communities, and modular socio--technical systems.
From a game--theoretic standpoint, coalition formation in this setting lies at
the intersection of noncooperative and cooperative games. Agents behave
noncooperatively in the sense that they pursue individually rational improvements
through unilateral exit--and--join moves. At the same time, payoffs within each
coalition are determined cooperatively through solution concepts such as the
Aumann--Dr\`eze or Owen values. The coalition structure itself is thus an emergent
object, shaped by individual incentives interacting with cooperative payoff
allocation rules \cite{AumannDreze1974,owen1977values,Owen2013}.

\subsection{Relevant Applications}

In multi--agent artificial intelligence systems, agents must decide which tasks,
projects, or workflows to participate in. An agent may initially join a
collaborative task based on partial or local information and subsequently exit
when a higher--value or better--matched task becomes available. Such decisions are
typically made autonomously and without centralized coordination, relying instead
on local assessments of expected utility, resource requirements, or compatibility
with other agents. The resulting allocation of agents to tasks therefore emerges
from distributed exit--and--join decisions rather than from a global planning
mechanism. In this interpretation, tasks correspond to coalitions, and acceptance
reflects task capacity, compatibility, or coordination constraints
\cite{ShehoryKraus1998,SandholmEtAl1999}.

Labor markets provide a canonical example of exit--and--join dynamics. Workers
leave existing organizations and join new ones when they expect higher
compensation, improved working conditions, or better career prospects. These
decisions are decentralized and often myopic, based on local information such as
job offers, peer outcomes, and internal organizational policies. Firms impose
acceptance constraints through hiring decisions, compensation structures, and
promotion rules. Public policies and institutional factors, including hiring
frictions, noncompete clauses, and labor regulations, directly affect switching
costs and acceptance conditions, thereby shaping patterns of workforce mobility
and talent flow. The resulting organizational structure of the labor market
emerges endogenously from individual incentives interacting with these
constraints, paralleling coalition-formation models in which admissible
partitions are shaped by local incentives and institutional rules
\cite{HartKurz1983,BanerjeeKonishiSonmez2001}.

Subscription markets, such as mobile service providers or digital streaming
platforms, exhibit similar decentralized reconfiguration dynamics. Consumers
periodically reassess their memberships and may exit one service to join another
based on price, quality, bundled offerings, or network effects. Service providers
influence acceptance indirectly through pricing tiers, contract terms, and service
differentiation. From a modeling perspective, platforms correspond to competing
coalitions, while consumers act as agents who repeatedly evaluate whether to
remain or switch. Market structure, including user distribution and churn rates,
thus arises from many individual exit--and--join decisions rather than from
centralized coordination.

Across these applications, coalition formation is driven by distributed individual decisions, shaped by local payoff comparisons and acceptance constraints, rather than by global optimization. Coalitions form, grow, shrink, or disappear as agents respond to changing incentives, costs, and opportunities. This common structure makes exit--and--join dynamics a unifying modeling framework for organizational formation and reconfiguration in both human and artificial systems.
The paper makes four contributions. First, it formulates coalition formation as
an asynchronous exit--and--join process in which agents use only coalition-local
Aumann--Dr\`eze payoffs. Second, it characterizes terminal coalition structures
as fixed points of an induced noncooperative best-response problem. Third, it
separates efficiency properties of the underlying cooperative game from dynamic
implementability, showing that scalar Lyapunov and exact-potential
representations require explicit incentive-alignment conditions. Fourth, it
uses numerical experiments to examine finite termination, switching and
acceptance frictions, and a special convex benchmark in which the grand
coalition is dynamically selected.

\section{Related Work}
\label{sec:related_work}

Coalition formation has a long history in cooperative and noncooperative game
theory. Classical cooperative analysis studies stability and efficiency of
coalitions through solution concepts such as the core and the Shapley value
\cite{gillies1953core,Shapley1953,Maschler2013,Myerson1991,Moulin1988}, while
endogenous coalition formation models study how partitions arise from strategic
behavior rather than being imposed exogenously. Early and influential treatments
include coalition formation through game forms and binding agreements
\cite{HartKurz1983,RayVohra1999}, sequential coalition formation with
externalities \cite{Bloch1996Sequential}, dynamic coalition formation processes
\cite{konishi2003coalition}, split--merge dynamics for Shapley-fair coalition
formation \cite{ZhuHan2026SplitMerge}, and dynamic cooperative games
\cite{filar2000dynamic,bauso2009robust}. The present paper is closest in spirit
to this dynamic literature, but it differs by using the Aumann--Dr\`eze value as
the payoff rule at every intermediate partition and by focusing on local
exit--and--join deviations with acceptance and switching costs.

A second related line of work studies hedonic coalition formation, where each
agent's preferences depend only on the members of its own coalition. Hedonic
coalitions were introduced by Dr\`eze and Greenberg \cite{DrezeGreenberg1980},
and stability notions for hedonic partitions were developed extensively in later
work, including Bogomolnaia and Jackson's analysis of hedonic coalition
structures \cite{BogomolnaiaJackson2002}. Subsequent work has studied stable
partitions and generic coalition-formation procedures
\cite{AptRadzik2009,apt2009coalition}, as well as simple models of core
stability under unilateral or group deviations \cite{BanerjeeKonishiSonmez2001}.
Our model shares the locality of hedonic games because an agent's payoff is
computed from its current coalition. However, preferences are not primitive:
they are induced endogenously by a transferable-utility game and a cooperative
allocation rule, which makes it possible to connect local incentives to surplus
and potential functions.

The allocation rule used here builds directly on cooperative values with
coalition structures. The Aumann--Dr\`eze value allocates worth within each
coalition of a fixed partition \cite{AumannDreze1974}, whereas the Owen value
first treats coalitions as a priori unions in a quotient game and then allocates
within each union \cite{owen1977values,Owen2013}. Related cooperative models
also study restrictions induced by networks or communication structures, as in
Myerson's graph-restricted games \cite{Myerson1977Graphs}, and strategic
network formation models study how links and cooperation structures arise from
individual incentives \cite{JacksonWolinsky1996}. These approaches provide
alternative ways of embedding organizational constraints into payoff allocation.
The present paper uses the Aumann--Dr\`eze value because its
coalition-local structure is compatible with decentralized exit--and--join
updates and with local acceptance tests performed by destination coalitions.

There is also a substantial computational and multi-agent literature on
coalition formation. Coalition-structure generation algorithms typically seek a
partition that maximizes total value or provides approximation guarantees under
large search spaces \cite{SandholmEtAl1999,ChalkiadakisElkindWooldridge2011}.
Task-allocation methods for multi-agent systems use coalition formation to
assign agents to tasks under resource and compatibility constraints
\cite{ShehoryKraus1998}, coalitional control uses cooperative-game tools for
networked control and coordination \cite{fele2017coalitional}, and
learning-based approaches study coalition formation under uncertainty
\cite{chalkiadakis2004bayesian}. By contrast, the exit--and--join dynamics
studied here do not solve a centralized
coalition-structure generation problem. They describe a decentralized adjustment
process in which agents move myopically, destination coalitions apply local
acceptance constraints, and convergence is established through Lyapunov or
potential arguments rather than through global optimization.

Finally, the convergence analysis is related to potential games and learning
dynamics. Potential games show how unilateral incentives can be represented by a
single scalar function \cite{MondererShapley1996}, and cooperative-game work has
connected values, potentials, and consistency conditions
\cite{hart1989potential,hart1996bargaining}. Recent work on learning in
coalitional games also emphasizes distributed adaptation and convergence
\cite{hamed2023distributed,smyrnakis2019game}. The contribution here is to
identify when Aumann--Dr\`eze exit--and--join incentives admit exact or ordinal
alignment with coalition surplus. This separation clarifies why cooperative
convexity can imply efficiency of the grand coalition without guaranteeing that
myopic decentralized dynamics will select it.

\section{Aumann--Dr\`eze Value}
\label{sec:ad_value}

\subsection{Motivation}
\label{subsec:motivation}

Let $N=\{1,\dots,n\}$ be a finite set of agents and let
$v:2^N\to\mathbb{R}$ be a transferable--utility (TU) cooperative game with
$v(\varnothing)=0$.
The Shapley value $\phi(v)$ \cite{Shapley1953} presumes that the grand coalition
$N$ forms and that the entire surplus $v(N)$ is distributed among agents according
to expected marginal contributions averaged over all permutations of players.
Under this assumption, all potential synergies encoded in the characteristic
function $v$ are fully realizable, regardless of any intermediate organizational,
institutional, or technological constraints.

In many socio--technical, economic, and organizational systems, however,
cooperation is constrained by an existing \emph{coalition structure}
$T=\{T_1,\dots,T_m\}\in\Pi(N)$, which specifies the coalitions within which
binding agreements, transfers, and coordination are feasible.
Agents belonging to different coalitions in $T$ cannot form binding
subcoalitions or share surplus directly, even if the characteristic function $v$
assigns positive value to such cross--coalition cooperation.
Examples include firms with rigid departmental boundaries, political alliances,
supply--chain consortia, or modular cyber--physical systems in which coordination
across subsystems is costly or infeasible.
The Aumann--Dr\`eze (AD) value \cite{AumannDreze1974} is designed precisely for
this setting.
It provides a Shapley--consistent allocation rule that respects the constraints
imposed by a given coalition structure.
Rather than assuming that the grand coalition forms, the AD value conditions on
the coalition structure $T$ and allocates surplus \emph{within} each coalition.

Conceptually, the Aumann--Dr\`eze value can be understood as a two--stage
construction with a \emph{fixed} first stage.
In the first stage, the coalition structure $T$ is taken as exogenously given and
no bargaining or strategic interaction takes place across coalitions.
In the second stage, each coalition $T_j$ independently distributes its worth
$v(T_j)$ among its members according to the Shapley value of the restricted game
$v_{|T_j}$.
Thus, while marginal contributions are averaged over permutations of players
within each coalition, no marginal contributions across distinct coalitions are
ever considered.
As a consequence, surplus generated outside a coalition is not shared, and
potential cross--coalition synergies encoded in $v$ remain unrealized.
The Aumann--Dr\`eze allocation therefore generally differs from the Shapley value
whenever the characteristic function exhibits complementarities across distinct
coalitions.

The Owen value \cite{owen1977values,Owen2013} also admits a two--stage interpretation, but of a
fundamentally different nature.
In the first stage, coalitions in $T$ bargain strategically as aggregate players
in a \emph{quotient game} induced by the original characteristic function $v$.
This stage explicitly internalizes cross--coalition externalities at the level of
coalitions.
In the second stage, the surplus allocated to each coalition is distributed
internally among its members according to the Shapley value.
The Owen value coincides with the Aumann--Dr\`eze value only in special cases,
most notably when the game is \emph{additively separable across the coalition
structure} $T$, so that no strategic externalities arise between distinct
coalitions \cite{AumannDreze1974,owen1977values}.
In general, however, the Owen value internalizes cross--coalition interactions at
the coalition level, whereas the Aumann--Dr\`eze value does not.

From a dynamic perspective, the Aumann--Dr\`eze value is particularly well suited
to modeling decentralized coalition formation and adaptation.
Because payoffs depend only on local coalition membership, agents can evaluate
unilateral exit--and--join deviations without requiring global coordination,
coalition--level renegotiation, or computation of a quotient game.
This locality property makes the Aumann--Dr\`eze value a natural foundation for
the exit--and--join dynamics studied in this paper, where coalition structures
evolve endogenously through myopic, individually rational moves subject to
acceptance and switching costs.

\subsection{Definition}
\label{subsec:ad_definition}

Let $N=\{1,\dots,n\}$ be a finite set of players and let
$v:2^N\to\mathbb{R}$ be a transferable--utility cooperative game with
$v(\varnothing)=0$.
Let $\Pi(N)$ denote the set of all partitions of $N$. For any coalition structure $T=\{T_1,\dots,T_m\}\in\Pi(N)$ and any player
$i\in N$, denote by $C_T(i)\in T$ the unique coalition containing $i$.
Given $T$ and $i$, the restriction of $v$ to $C_T(i)$ is defined by
\begin{equation}
\label{eq:restricted_game}
v_{|C_T(i)}(S):=v(S),
\qquad
\forall\, S\subseteq C_T(i).
\end{equation}

\begin{example}[Restriction of the game to a coalition]
\label{ex:restricted_game}

Let $N=\{1,2,3\}$ and consider the transferable--utility game
$v:2^N\to\mathbb{R}$ defined by the coalition values in
Table~\ref{tab:game_v_values}.

\begin{table}[t]
\centering
\caption{Characteristic function values of the TU game $v$}
\label{tab:game_v_values}
\begin{tabular}{c|c}
\hline
Coalition $S$ & $v(S)$ \\
\hline
$\varnothing$ & $0$ \\
$\{1\}$ & $0$ \\
$\{2\}$ & $0$ \\
$\{3\}$ & $0$ \\
$\{1,2\}$ & $4$ \\
$\{1,3\}$ & $2$ \\
$\{2,3\}$ & $0$ \\
$\{1,2,3\}$ & $6$ \\
\hline
\end{tabular}
\end{table}

Consider the coalition structure $T=\{\{1,2\},\{3\}\}\in\Pi(N)$. Under the
partition $T$, the restricted games are obtained by restricting the domain of
$v$ to subsets of each coalition.

\medskip
\noindent\textbf{Restricted game on $\{1,2\}$.}
\begin{center}
\begin{tabular}{c|c}
\hline
$S\subseteq\{1,2\}$ & $v_{|\{1,2\}}(S)$ \\
\hline
$\varnothing$ & $0$ \\
$\{1\}$ & $0$ \\
$\{2\}$ & $0$ \\
$\{1,2\}$ & $4$ \\
\hline
\end{tabular}
\end{center}

\medskip
\noindent\textbf{Restricted game on $\{3\}$.}
\begin{center}
\begin{tabular}{c|c}
\hline
$S\subseteq\{3\}$ & $v_{|\{3\}}(S)$ \\
\hline
$\varnothing$ & $0$ \\
$\{3\}$ & $0$ \\
\hline
\end{tabular}
\end{center}

Thus, relative to the coalition structure $T$, the original game $v$ decomposes
into two independent subgames: a two-player game on $\{1,2\}$ and a singleton
game on $\{3\}$.
\end{example}

\begin{definition}[Aumann--Dr\`eze value]
\label{def:ad_value}

The \emph{Aumann--Dr\`eze value} of $v$ relative to the coalition structure $T$ is
the payoff vector $\Omega(v;T)=\bigl(\Omega_i(v;T)\bigr)_{i\in N}$ defined
componentwise by
\begin{equation}
\label{eq:ad_definition}
\Omega_i(v;T)
=
\phi_i\!\left(v_{|C_T(i)}\right),
\qquad
i\in N,
\end{equation}
where $\phi(\cdot)$ denotes the Shapley value.
\end{definition}

Equivalently, for any coalition $T_j\in T$ and any player $i\in T_j$,
\begin{equation}
\label{eq:ad_formula}
\Omega_i(v;T)
=
\sum_{S\subseteq T_j\setminus\{i\}}
\frac{|S|!\,(|T_j|-|S|-1)!}{|T_j|!}
\bigl[v(S\cup\{i\})-v(S)\bigr].
\end{equation}

\begin{example}[Computation of the Aumann--Dr\`eze value]
\label{ex:ad_computation}

Using \eqref{eq:ad_formula} for the partition
$T=\{\{1,2\},\{3\}\}$, we compute the Aumann--Dr\`eze value.

For agent $1$,
\[
\Omega_1(v;T)
=
\sum_{S\subseteq\{2\}}
\frac{|S|!\,(2-|S|-1)!}{2!}
\bigl[v(S\cup\{1\})-v(S)\bigr],
\]
which yields
\begin{center}
\begin{tabular}{c c c c}
\hline
$S$ & Weight & $v(S\cup\{1\})-v(S)$ & Contribution \\
\hline
$\varnothing$ & $\frac{1}{2}$ & $0$ & $0$ \\
$\{2\}$ & $\frac{1}{2}$ & $4$ & $2$ \\
\hline
\end{tabular}
\end{center}
Hence $\Omega_1(v;T)=2$. By symmetry, $\Omega_2(v;T)=2$.
Since $C_T(3)=\{3\}$ is a singleton coalition, $\Omega_3(v;T)=0$.
Thus $\Omega(v;T)=(2,2,0)$.
\end{example}

\begin{proposition}[Coalitional efficiency]
\label{prop:ad_efficiency}

For every coalition $T_j\in T$,
\begin{equation}
\label{eq:ad_efficiency}
\sum_{i\in T_j}\Omega_i(v;T)=v(T_j).
\end{equation}
\end{proposition}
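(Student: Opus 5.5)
The claim follows from the efficiency of the Shapley value applied coalition by coalition. Fix $T_j\in T$. For every $i\in T_j$ we have $C_T(i)=T_j$, so by Definition~\ref{def:ad_value} each term is $\Omega_i(v;T)=\phi_i(v_{|T_j})$. Hence
\[
\sum_{i\in T_j}\Omega_i(v;T)=\sum_{i\in T_j}\phi_i\!\left(v_{|T_j}\right).
\]
The restricted game $v_{|T_j}$ is a TU game on the player set $T_j$, with $v_{|T_j}(\varnothing)=v(\varnothing)=0$. It therefore suffices to show that the Shapley value of a TU game on a finite player set $M$ sums to the worth of $M$; here $M=T_j$ and that worth is $v(T_j)$.

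I would prove this efficiency directly from the permutation representation rather than just cite it, since \eqref{eq:ad_formula} is the form the paper gives. Let $m=|M|$. For an ordering $\sigma$ of $M$, let $P_i^\sigma$ be the set of players preceding $i$. The weight $\frac{|S|!\,(m-|S|-1)!}{m!}$ in \eqref{eq:ad_formula} is exactly the fraction of the $m!$ orderings in which $P_i^\sigma=S$. Thus
\[
\phi_i(v_{|M})=\frac{1}{m!}\sum_{\sigma}\bigl[v(P_i^\sigma\cup\{i\})-v(P_i^\sigma)\bigr].
\]
Summing over $i\in M$ and exchanging the two finite sums, for each fixed $\sigma$ the inner sum over $i$ telescopes along the ordering to $v(M)-v(\varnothing)=v(M)$. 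Averaging over the $m!$ orderings then gives $v(M)$.

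The only step needing care is the counting identity that links the subset weights to orderings. The predecessor set equals a given $S\subseteq M\setminus\{i\}$ exactly when the members of $S$ occupy the first $|S|$ positions in any order, $i$ comes next, and the remaining $m-|S|-1$ players follow in any order. That gives $|S|!\,(m-|S|-1)!$ orderings. With this identity in hand, the rest is a rearrangement of finite sums, and the argument applies to each $T_j\in T$ independently.
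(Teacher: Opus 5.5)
Your proof is correct and takes the same route as the paper: you identify $(\Omega_i(v;T))_{i\in T_j}$ with the Shapley value of $v_{|T_j}$ and invoke Shapley efficiency. The only difference is that you also derive that efficiency yourself, from the permutation representation via telescoping, whereas the paper simply cites it.
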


\begin{proof}
For each $T_j\in T$, the vector $(\Omega_i(v;T))_{i\in T_j}$ coincides with the
Shapley value of the restricted game $v_{|T_j}$. Since the Shapley value is
efficient, the sum equals $v(T_j)$.
\end{proof}

\begin{example}[Coalitional efficiency in the running example]
\label{ex:ad_efficiency}

For the coalition $\{1,2\}$,
$\Omega_1(v;T)+\Omega_2(v;T)=2+2=4=v(\{1,2\})$, and for the singleton
coalition $\{3\}$, $\Omega_3(v;T)=0=v(\{3\})$.
Thus \eqref{eq:ad_efficiency} holds for every coalition in $T$.
\end{example}

\section{Exit--and--Join Rules under Aumann--Dr\`eze}
\label{sec:exit_join_ad}

\subsection{State space and notation}

Let $N=\{1,\dots,n\}$ be a finite set of agents and let
$v:2^N\to\mathbb{R}$ be a transferable--utility (TU) cooperative game with
$v(\varnothing)=0$.
The system state at time $t$ is a coalition structure
$T_t \in \Pi(N)$, where $\Pi(N)$ denotes the set of all partitions of $N$.
For each agent $i\in N$, let $C_{T_t}(i)\in T_t$ denote the unique coalition
containing $i$ at time $t$.
Payoffs are evaluated using the Aumann--Dr\`eze value:
$\Omega_i(v;T_t)=\phi_i\!\left(v\big|_{C_{T_t}(i)}\right)$, that is, agent
$i$ receives its Shapley value in the game restricted to its current coalition.
The evolution of the system is driven by unilateral exit--and--join decisions of
individual agents.

\subsection{Exit--and--join actions}

\begin{definition}[Exit--and--join action]
\label{def:exit_join_action}
Let $T_t\in\Pi(N)$ be a coalition structure at time $t$ and let $i\in N$ be an
agent.
An \emph{exit--and--join action} of agent $i$ at time $t$ consists of the
selection of a destination
\[
D\in\mathcal D_i(T_t)
:=
\bigl(T_t \setminus \{C_{T_t}(i)\}\bigr)\cup\{\varnothing\},
\]
where $C_{T_t}(i)$ denotes the unique coalition of $T_t$ containing agent $i$.
\end{definition}

\begin{definition}[Exit--and--join transition]
\label{def:exit_join_transition}
Given a coalition structure $T_t\in\Pi(N)$, an agent $i\in N$, and a destination
$D\in\mathcal D_i(T_t)$, the \emph{exit--and--join transition} induced by agent $i$
and destination $D$ is the coalition structure
\begin{equation}
\label{eq:ej_transition}
T_{t+1}
=
F_i(T_t,D)
:=
\bigl(T_t\setminus\{C_{T_t}(i),D\}\bigr)
\cup
\bigl\{C_{T_t}(i)\setminus\{i\}\bigr\}^{\!*}
\cup
\{D\cup\{i\}\},
\end{equation}
where $\{\cdot\}^{\!*}$ denotes inclusion only if the set is nonempty, and where
$D\cup\{i\}=\{i\}$ when $D=\varnothing$.
\end{definition}

The mapping $F_i:\Pi(N)\times\mathcal D_i(T_t)\to\Pi(N)$ defines a well--posed
discrete--time state transition on the space of coalition structures.
If $D=\varnothing$, the transition corresponds to agent $i$ exiting its current
coalition and forming a singleton.
If $D\in T_t\setminus\{C_{T_t}(i)\}$, the transition corresponds to agent $i$
exiting its current coalition and joining the destination coalition.
If agent $i$ is initially a singleton and joins a nonempty destination, the
singleton coalition disappears.
In all cases, the transition modifies at most two coalitions and preserves the
partition structure of $\Pi(N)$.

\subsection{Acceptance rule}

Exit--and--join moves may be subject to acceptance by the destination coalition.

\begin{definition}[Acceptance rule]
\label{def:acceptance}
Given a proposed move $T_t \to F_i(T_t,D)$, the destination coalition
$D\in T_t$ accepts agent $i$ if
\[
\Omega_\ell(v;F_i(T_t,D))
\ge
\Omega_\ell(v;T_t),
\qquad
\forall \ell\in D.
\]
\end{definition}
A costly-admission variant is useful when incumbents incur onboarding,
coordination, or congestion costs from admitting a new member. In that case a
scalar acceptance cost $\kappa\ge 0$ modifies the rule to
\(\Omega_\ell(v;F_i(T_t,D))-\kappa\ge\Omega_\ell(v;T_t)\) for every
\(\ell\in D\). The baseline acceptance rule corresponds to \(\kappa=0\), while
larger values of \(\kappa\) make coalitions more selective even when entry would
not lower their gross Aumann--Dr\`eze payoffs.
A particularly simple benchmark is \emph{automatic consent}, in which all
destination coalitions accept all entrants.
This corresponds to the case in which coalition membership is unrestricted
and agents are free to join any coalition unilaterally.
Formally, the acceptance condition is vacuous and every proposed exit--and--join
move is admissible subject only to the deviating agent's own incentive.

\subsection{Switching costs}

While exit--and--join actions describe the set of feasible coalition
reconfigurations, actual coalition changes may involve frictions that affect an
agent's willingness to deviate.
Such frictions may arise from coordination effort, renegotiation of internal
transfers, loss of institutional capital, or adjustment delays associated with
entering a new coalition.
To capture these effects, we associate a switching cost with each exit--and--join
transition.

\begin{definition}[Switching cost]
\label{def:switching_cost}
For any coalition structure $T_t\in\Pi(N)$, agent $i\in N$, and destination
$D\in\mathcal D_i(T_t)$, the switching cost incurred by agent $i$ when executing
the exit--and--join transition $T_t \longrightarrow F_i(T_t,D)$ is given by a
function $c_i\bigl(T_t,F_i(T_t,D)\bigr) \ge 0$.
\end{definition}

The switching cost is allowed to depend on the current coalition structure, the
identity of the moving agent, and the resulting coalition structure, thereby
accommodating heterogeneous frictions and history--dependent adjustment costs.
A fundamental benchmark is the frictionless case in which switching is costless.
Throughout much of the analysis, we therefore consider the zero--cost regime
defined by
\[
c_i\bigl(T_t,F_i(T_t,D)\bigr)\equiv 0
\qquad
\text{for all } i\in N,\; T_t\in\Pi(N),\; D\in\mathcal D_i(T_t).
\]
This regime isolates the incentive effects induced purely by the Aumann--Dr\`eze
payoff and yields the simplest form of exit--and--join dynamics.
Positive switching costs will be introduced subsequently to refine equilibrium
selection and to strengthen stability and convergence properties of the induced
dynamical system.

\subsection{Decision rule}

Given a coalition structure $T_t\in\Pi(N)$ at time $t$, each agent evaluates
potential exit--and--join actions by comparing its payoff under the current
coalition structure with the payoff induced by the corresponding transition,
net of any switching costs.
The evaluation is unilateral in the sense that each agent treats the induced
coalition structure as fixed when assessing the consequences of a deviation.

\begin{definition}[Aumann--Dr\`eze exit--and--join rule]
\label{def:ad_ej_rule}
At time $t$, agent $i\in N$ accepts a destination
$D\in\mathcal D_i(T_t)$ if the induced exit--and--join transition
$T_t\to F_i(T_t,D)$ satisfies
\[
\Omega_i\bigl(v;F_i(T_t,D)\bigr)
-
c_i\bigl(T_t,F_i(T_t,D)\bigr)
>
\Omega_i(v;T_t),
\]
and the destination coalition $D$ satisfies the acceptance condition.
If multiple destinations satisfy this inequality, agent $i$ selects a
destination that maximizes its net payoff
$\Omega_i(v;F_i(T_t,D)) - c_i(T_t,F_i(T_t,D))$.
If no destination is admissible, the agent remains in its current coalition and
the state remains unchanged, so that $T_{t+1}=T_t$.
\end{definition}

The resulting behavior is myopic in the sense that agents optimize their own
Aumann--Dr\`eze payoff at each step without anticipating future coalition
reconfigurations or strategic responses by other agents.
Consequently, the exit--and--join rule induces a discrete--time, asynchronous
best--response dynamics on the finite state space $\Pi(N)$.

\section{Exit--and--Join Equilibrium}

We now formalize the equilibrium notion induced by unilateral exit--and--join
behavior under the Aumann--Dr\`eze value and examine its structural properties.

\begin{definition}[Exit--and--join equilibrium]
\label{def:exit_join_equilibrium}
A coalition structure $T^\star \in \Pi(N)$ is an
\emph{exit--and--join equilibrium} under the Aumann--Dr\`eze value if there exists
no agent $i \in N$ and no destination $D \in \mathcal D_i(T^\star)$ such that
the exit--and--join transition $T^\star \to F_i(T^\star,D)$ is admissible and
\[
\Omega_i\!\left(v;F_i(T^\star,D)\right)
-
c_i\!\left(T^\star,F_i(T^\star,D)\right)
>
\Omega_i(v;T^\star).
\]
\end{definition}

Equivalently, no agent can strictly improve its Aumann--Dr\`eze payoff by a
unilateral exit--and--join deviation that is accepted by the destination
coalition.

\begin{example}[Exit--and--join deviation in the running example]
\label{ex:running_exit_join}
We continue with the transferable--utility game, coalition structure, and
Aumann--Dr\`eze payoff vector introduced in
Examples~\ref{ex:restricted_game}, \ref{ex:ad_computation},
and~\ref{ex:ad_efficiency}.
At time $t$, the coalition structure is
$T_t=\bigl\{\{1,2\},\{3\}\bigr\}$, with associated Aumann--Dr\`eze payoffs
$\Omega(v;T_t)=(2,2,0)$. We examine unilateral exit--and--join actions under
zero switching cost and automatic acceptance.

Consider first agent $2$.
Since $C_{T_t}(2)=\{1,2\}$, the feasible destinations are
$\mathcal D_2(T_t)=\{\{3\},\varnothing\}$.
Choosing $D=\{3\}$ yields
$T_{t+1}=F_2(T_t,\{3\})=\bigl\{\{1\},\{2,3\}\bigr\}$.
The restricted game on $\{2,3\}$ has zero worth, hence
\[
\Omega_2\bigl(v;F_2(T_t,\{3\})\bigr)=0
<
\Omega_2(v;T_t)=2.
\]
Thus this deviation is not profitable.
Next consider agent $3$.
Since $C_{T_t}(3)=\{3\}$, agent $3$ may choose $D=\{1,2\}$, yielding
$T_{t+1}=F_3(T_t,\{1,2\})=\bigl\{\{1,2,3\}\bigr\}$.
For this coalition structure, the Aumann--Dr\`eze value coincides with the Shapley
value of the grand coalition, and therefore
\[
\Omega_3\bigl(v;F_3(T_t,\{1,2\})\bigr)
=
\phi_3(v)
>
\Omega_3(v;T_t)=0.
\]
Hence agent $3$ admits a profitable exit--and--join deviation, and
$T_t=\{\{1,2\},\{3\}\}$ is not an exit--and--join equilibrium.
\end{example}

\subsection{Best--Response and Nash Equilibrium Interpretation}

Exit--and--join equilibrium admits a precise noncooperative interpretation as a
best--response fixed point and, equivalently, as a pure--strategy Nash equilibrium
of an induced, state--dependent game defined at each coalition structure.

\begin{definition}[Induced exit--and--join game]
\label{def:induced_ej_game}
Fix a coalition structure $T \in \Pi(N)$.
The \emph{induced exit--and--join game} at $T$ is the noncooperative game
\[
\mathcal G(T)
=
\bigl(
N,\{A_i(T)\}_{i\in N},\{u_i(\cdot;T)\}_{i\in N}
\bigr),
\]
where the player set is $N$. The action set of player $i$ is the
accepted-action set
\[
A_i(T)
:=
\{C_T(i)\}
\cup
\Bigl\{
D\in\mathcal D_i(T):
D=\varnothing \text{ or } D \text{ accepts } i
\Bigr\};
\]
choosing $D=C_T(i)$ is the stay action, with $F_i(T,C_T(i)):=T$ and
$c_i(T,T):=0$. The payoff to player $i$ from choosing $D \in A_i(T)$ is
\[
u_i(D;T)
:=
\Omega_i\bigl(v;F_i(T,D)\bigr)
-
c_i\bigl(T,F_i(T,D)\bigr).
\]
Actions chosen by other agents are not jointly implemented and therefore do not
enter the payoff function.
\end{definition}

The best--response correspondence of agent $i$ at coalition structure $T$ is
\[
\mathrm{BR}_i(T)
:=
\arg\max_{D \in A_i(T)} u_i(D;T).
\]

\begin{lemma}[Best--response characterization]
\label{lem:best_response}
A coalition structure $T^\star$ is an exit--and--join equilibrium if and only if,
for every agent $i \in N$,
\[
C_{T^\star}(i) \in \mathrm{BR}_i(T^\star).
\]
\end{lemma}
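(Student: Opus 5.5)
The argument unfolds the two definitions and checks that they say the same thing. First I will note that the stay action is always available: $C_{T^\star}(i)\in A_i(T^\star)$ by construction. Moreover, $F_i(T^\star,C_{T^\star}(i))=T^\star$ and $c_i(T^\star,T^\star)=0$, so the stay payoff is $u_i(C_{T^\star}(i);T^\star)=\Omega_i(v;T^\star)$. Since $A_i(T^\star)$ is finite and nonempty, $\mathrm{BR}_i(T^\star)$ is nonempty. The condition $C_{T^\star}(i)\in\mathrm{BR}_i(T^\star)$ is therefore equivalent to
\[
u_i(D;T^\star)\le \Omega_i(v;T^\star)\qquad\text{for all } D\in A_i(T^\star).
\]

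The key step is to match admissible deviations in Definition~\ref{def:exit_join_equilibrium} with the non-stay elements of $A_i(T^\star)$. A destination $D\in\mathcal D_i(T^\star)$ gives an admissible transition exactly when either $D=\varnothing$, where there is no destination coalition to consent, or $D$ accepts $i$ under Definition~\ref{def:acceptance}. This is precisely the membership condition defining $A_i(T^\star)\setminus\{C_{T^\star}(i)\}$. I would state explicitly that I read ``admissible'' this way, with the singleton move always admissible, since the paper leaves it slightly implicit. The same reading applies to the $\kappa$-variant, provided the acceptance rule used in $A_i$ is the same one used for admissibility.

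With this identification, both directions follow by contraposition. ($\Rightarrow$) Suppose $T^\star$ is an equilibrium but, for some $i$, $C_{T^\star}(i)\notin\mathrm{BR}_i(T^\star)$. Then some $D\in A_i(T^\star)$ has $u_i(D;T^\star)>\Omega_i(v;T^\star)$. Necessarily $D\neq C_{T^\star}(i)$, so $D$ is an admissible deviation with $\Omega_i(v;F_i(T^\star,D))-c_i>\Omega_i(v;T^\star)$, which contradicts equilibrium. ($\Leftarrow$) Suppose $C_{T^\star}(i)\in\mathrm{BR}_i(T^\star)$ for all $i$ but some admissible profitable deviation $D$ exists. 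Then $D\in A_i(T^\star)$ and it yields a strictly larger $u_i$ than staying, which contradicts the maximality of the stay action.

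The main obstacle is not computational. It is making the bookkeeping airtight on two points: the convention $F_i(T,C_T(i))=T$ with zero cost, and the exact correspondence between ``admissible'' and ``$D=\varnothing$ or $D$ accepts $i$''. Once these are fixed, the equivalence is immediate.
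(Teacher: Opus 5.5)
Your proposal is correct and follows essentially the same route as the paper: both directions just unfold the definitions, using that the stay action gives $u_i(C_{T^\star}(i);T^\star)=\Omega_i(v;T^\star)$ and that the non-stay elements of $A_i(T^\star)$ are exactly the admissible destinations. Your contrapositive phrasing, and your explicit note that $D=\varnothing$ is always admissible, spell out what the paper compresses into its single ``equivalently'' step.
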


\begin{proof}
($\Rightarrow$)
Suppose that $T^\star$ is an exit--and--join equilibrium.
By Definition~\ref{def:exit_join_equilibrium}, there exists no destination
$D \in \mathcal D_i(T^\star)$ such that the exit--and--join transition
$T^\star \to F_i(T^\star,D)$ is admissible and yields a strictly higher payoff for
agent $i$.
Equivalently,
\[
u_i(D;T^\star)
\le
u_i\bigl(C_{T^\star}(i);T^\star\bigr),
\qquad
\forall\, D \in A_i(T^\star).
\]
Since choosing $D=C_{T^\star}(i)$ leaves the coalition structure unchanged and
incurs zero switching cost, it attains the maximum of $u_i(\cdot;T^\star)$.
Hence $C_{T^\star}(i) \in \mathrm{BR}_i(T^\star)$.

($\Leftarrow$)
Conversely, suppose that for every agent $i$,
$C_{T^\star}(i) \in \mathrm{BR}_i(T^\star)$.
Then no destination $D \in \mathcal D_i(T^\star)$ yields a strictly higher payoff
than remaining in the current coalition.
In particular, there exists no admissible exit--and--join deviation that strictly
improves agent $i$'s payoff.
Since this holds for every agent, $T^\star$ admits no profitable admissible
exit--and--join deviation and is therefore an exit--and--join equilibrium.
\end{proof}

\begin{proposition}[Exit--and--join equilibrium as Nash equilibrium]
\label{prop:ej_nash}
A coalition structure $T^\star \in \Pi(N)$ is an exit--and--join equilibrium if and
only if the action profile
\[
D_i^\star := C_{T^\star}(i), \qquad i \in N,
\]
is a pure--strategy Nash equilibrium of the induced game $\mathcal G(T^\star)$.
\end{proposition}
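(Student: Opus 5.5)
The plan is to reduce the statement to Lemma~\ref{lem:best_response}, using the special structure of the induced game $\mathcal G(T^\star)$. The key observation comes from Definition~\ref{def:induced_ej_game}. There, the payoff $u_i(D;T^\star)$ depends only on player $i$'s own action $D$ and on the fixed state $T^\star$, not on the actions $D_{-i}$ of the others. So for any profile $(D_i,D_{-i})$, player $i$'s payoff is simply $u_i(D_i;T^\star)$. Hence a profile $(D_i^\star)_{i\in N}$ is a pure-strategy Nash equilibrium of $\mathcal G(T^\star)$ if and only if, for every $i$,
\[
u_i(D_i^\star;T^\star)\ge u_i(D;T^\star)\qquad\forall\,D\in A_i(T^\star).
\]
By the definition of $\mathrm{BR}_i$, this says exactly that $D_i^\star\in\mathrm{BR}_i(T^\star)$ for all $i$.

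\textbf{Step 1: the profile is feasible.} I will check that $D_i^\star=C_{T^\star}(i)$ lies in $A_i(T^\star)$. This holds by construction, since the stay action is included explicitly in $A_i(T^\star)$. Its payoff is $u_i(C_{T^\star}(i);T^\star)=\Omega_i(v;T^\star)$, using the conventions $F_i(T,C_T(i))=T$ and $c_i(T,T)=0$.

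\textbf{Step 2: Nash equilibrium reduces to best responses.} Using the independence of $u_i$ from $D_{-i}$ noted above, the Nash condition for the profile $(C_{T^\star}(i))_{i\in N}$ becomes the condition that $C_{T^\star}(i)\in\mathrm{BR}_i(T^\star)$ for every $i$.

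\textbf{Step 3: invoke the lemma.} By Lemma~\ref{lem:best_response}, the condition from Step 2 is equivalent to $T^\star$ being an exit--and--join equilibrium. Chaining the equivalences in Steps 2 and 3 proves both directions at once.

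The only delicate point is aligning ``admissible'' in Definition~\ref{def:exit_join_equilibrium} with membership in $A_i(T^\star)$. The lemma already relies on this identification. If needed, I will state it explicitly: for $D=\varnothing$ no acceptance is required, and for a nonempty destination $D$ admissibility is exactly the acceptance condition in Definition~\ref{def:acceptance}. This is precisely the condition that defines $A_i(T^\star)$, so the deviations excluded by the equilibrium definition are exactly the non-stay actions in $A_i(T^\star)$ that yield a strictly higher payoff. Apart from this bookkeeping, the proof is a direct rewriting, and I expect no substantive obstacle.
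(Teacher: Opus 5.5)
Your proposal is correct and follows the paper's proof: both note that $u_i(\cdot;T^\star)$ in $\mathcal G(T^\star)$ depends only on player $i$'s own action, so the Nash condition reduces to $C_{T^\star}(i)\in\mathrm{BR}_i(T^\star)$ for every $i$, and then invoke Lemma~\ref{lem:best_response}. Your explicit check that the stay action lies in $A_i(T^\star)$, and your matching of ``admissible'' deviations with the non-stay actions in $A_i(T^\star)$, are bookkeeping steps the paper leaves implicit.
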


\begin{proof}
Fix the coalition structure $T^\star$ and consider the induced game
$\mathcal G(T^\star)$.
By construction, the payoff of agent $i$ depends only on its own action
$D_i \in A_i(T^\star)$ and not on the actions of other agents.

An action profile $(D_i^\star)_{i\in N}$ is therefore a pure--strategy Nash
equilibrium of $\mathcal G(T^\star)$ if and only if, for every agent $i$,
\[
D_i^\star \in \arg\max_{D \in A_i(T^\star)} u_i(D;T^\star),
\]
that is, $D_i^\star \in \mathrm{BR}_i(T^\star)$.

Substituting $D_i^\star = C_{T^\star}(i)$, this condition is equivalent, by
Lemma~\ref{lem:best_response}, to $T^\star$ being an exit--and--join equilibrium.
\end{proof}

\subsection{Grand Coalition and Efficiency}

\begin{proposition}[Grand coalition stability under convexity]
\label{prop:grand_coalition_convex}
If the cooperative game $v$ is convex and switching costs are nonnegative, then
the grand coalition $\{N\}$ is an exit--and--join equilibrium under the
Aumann--Dr\`eze value.
\end{proposition}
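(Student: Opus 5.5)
At $T^\star=\{N\}$ the only deviation available to agent $i$ is $D=\varnothing$, since $T^\star\setminus\{C_{T^\star}(i)\}$ is empty. So the plan is to show that leaving to form a singleton is never strictly profitable. By Definition~\ref{def:exit_join_equilibrium}, we must show that for every $i\in N$
\[
\Omega_i\bigl(v;F_i(\{N\},\varnothing)\bigr)-c_i\bigl(\{N\},F_i(\{N\},\varnothing)\bigr)\le \Omega_i(v;\{N\}).
\]
Acceptance plays no role here, because the singleton destination requires no consent.

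First, I compute both sides. The transition gives $F_i(\{N\},\varnothing)=\{N\setminus\{i\},\{i\}\}$ (or just $\{N\}$ when $n=1$, which is trivial). Agent $i$ is then a singleton, so by \eqref{eq:ad_formula} its payoff is $\Omega_i=v(\{i\})$. On the other side, $\Omega_i(v;\{N\})=\phi_i(v)$, the ordinary Shapley value.

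Second, since $c_i\ge 0$, it suffices to prove $\phi_i(v)\ge v(\{i\})$. I would use convexity in its supermodular, increasing-marginals form: $v(S\cup\{i\})-v(S)\ge v(\{i\})-v(\varnothing)=v(\{i\})$ for every $S\subseteq N\setminus\{i\}$. This follows by applying $v(A)+v(B)\le v(A\cup B)+v(A\cap B)$ with $A=\{i\}$ and $B=S$.

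Third, the Shapley formula \eqref{eq:ad_formula} with $T_j=N$ writes $\phi_i(v)$ as a convex combination of these marginal contributions, since the weights are nonnegative and sum to one. Hence $\phi_i(v)\ge v(\{i\})$, so the deviation yields net payoff $v(\{i\})-c_i\le \phi_i(v)$, which is not a strict improvement.

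The main obstacle is only bookkeeping. One must note that no other destination exists, and that convexity gives individual rationality of the Shapley value; one could equally cite that the Shapley value lies in the core of a convex game. I would present the elementary marginal-contribution argument to keep the proof self-contained.
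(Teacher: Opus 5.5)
Your proof is correct and follows the paper's argument essentially step for step. The only destination at $\{N\}$ is $D=\varnothing$, convexity gives every marginal contribution at least $v(\{i\})$, and the Shapley weights average these so that $\phi_i(v)\ge v(\{i\})$; nonnegative switching costs then rule out a strict gain. Your explicit derivation of the marginal inequality from supermodularity (with $A=\{i\}$, $B=S$) and your treatment of the $n=1$ case are small details the paper leaves implicit.
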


\begin{proof}
Consider the coalition structure $T=\{N\}$.
Under the Aumann--Dr\`eze value, each agent's payoff coincides with its Shapley
value in the full game:
\[
\Omega_i(v;\{N\}) = \phi_i(v), \qquad i \in N.
\]

When $T=\{N\}$, the only exit--and--join destination available to agent $i$ is
$D=\varnothing$, which makes $i$ a singleton. The resulting payoff is
$v(\{i\})$.
For a convex cooperative game, marginal contributions are increasing with
coalition size. Therefore, for every predecessor set
$S\subseteq N\setminus\{i\}$,
\[
v(S\cup\{i\})-v(S)\ge v(\{i\})-v(\varnothing)=v(\{i\}).
\]
The Shapley value $\phi_i(v)$ is an average of these marginal contributions, so
$\phi_i(v)\ge v(\{i\})$. Nonnegative switching costs can only reduce the net
payoff from exit. Hence no agent has a strictly profitable unilateral deviation
from the grand coalition.
Hence the grand coalition $\{N\}$ is an exit--and--join equilibrium.
\end{proof}

\begin{lemma}[Efficiency of the grand coalition]
\label{lem:efficiency_grand}
If $v$ is convex, then the grand coalition $\{N\}$ maximizes total
coalition surplus
\[
V(T):=\sum_{C\in T} v(C).
\]
If, in addition, $v(A\cup B)>v(A)+v(B)$ for every pair of nonempty disjoint
coalitions $A,B\subseteq N$, then the maximizer is unique.
\end{lemma}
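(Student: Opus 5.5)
The plan is to show that convexity gives superadditivity, and then to apply superadditivity repeatedly to merge the blocks of an arbitrary partition into $N$. Convexity of $v$ means $v(A\cup B)+v(A\cap B)\ge v(A)+v(B)$ for all $A,B\subseteq N$. For disjoint $A,B$ we have $A\cap B=\varnothing$, and $v(\varnothing)=0$, so this inequality becomes superadditivity:
\[
v(A\cup B)\ge v(A)+v(B).
\]
If the paper's convexity is instead the increasing-marginal-contributions form used in Proposition~\ref{prop:grand_coalition_convex}, I would first recover the supermodular inequality. I would add the elements of $B\setminus A$ one at a time and compare the marginal contributions over $A\cap B$ with those over $A$; this is a standard telescoping argument.

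Now fix an arbitrary partition $T=\{T_1,\dots,T_m\}\in\Pi(N)$. I would prove by induction on $k$ that
\[
v(T_1\cup\cdots\cup T_k)\ge \sum_{j=1}^k v(T_j).
\]
The case $k=1$ is trivial. For the inductive step, apply superadditivity with $A=T_1\cup\dots\cup T_{k-1}$ and $B=T_k$; these sets are disjoint because $T$ is a partition. Taking $k=m$ gives $V(\{N\})=v(N)\ge V(T)$, so $\{N\}$ maximizes $V$.

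For uniqueness, assume the strict condition $v(A\cup B)>v(A)+v(B)$ for nonempty disjoint $A,B$, and let $T\neq\{N\}$, so that $m\ge 2$. In the same induction, every step $k\ge 2$ merges two nonempty disjoint sets, so each such step is strict. There is at least one such step, hence $v(N)>V(T)$. Therefore every partition other than $\{N\}$ gives strictly smaller total surplus, and the maximizer is unique.

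The only step that needs care is the first one, namely which definition of convexity the paper uses. With the supermodular definition, superadditivity follows in one line. With the marginal-contribution definition, I would prove $v(A\cup B)-v(B)\ge v(A)-v(\varnothing)$ for disjoint $A,B$ directly. To do this, order $A=\{a_1,\dots,a_r\}$ and telescope, noting that
\[
v(B\cup\{a_1,\dots,a_s\})-v(B\cup\{a_1,\dots,a_{s-1}\})\ge v(\{a_1,\dots,a_s\})-v(\{a_1,\dots,a_{s-1}\})
\]
holds because $\{a_1,\dots,a_{s-1}\}\subseteq B\cup\{a_1,\dots,a_{s-1}\}$. The rest is routine.
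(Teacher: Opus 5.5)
Your proposal is correct and follows essentially the same route as the paper: convexity gives superadditivity, merging the blocks one at a time yields $V(T)\le v(N)$, and under the strict condition every merge of two nonempty blocks is strict, so $V(T)<v(N)$ for every $T\neq\{N\}$. The one addition is that you derive superadditivity from increasing marginal contributions by an explicit telescoping argument, which the paper only asserts.
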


\begin{proof}
Convexity implies superadditivity of the cooperative game: for any disjoint
coalitions $A,B \subseteq N$,
\[
v(A \cup B) \;\ge\; v(A) + v(B).
\]

Let $T=\{C_1,\dots,C_m\}$ be an arbitrary coalition structure.
By repeated application of superadditivity,
\[
v(N)
=
v\!\left(\bigcup_{k=1}^m C_k\right)
\;\ge\;
\sum_{k=1}^m v(C_k)
=
V(T).
\]
Thus $V(T) \le V(\{N\}) = v(N)$ for every $T \in \Pi(N)$.

If the displayed strict superadditivity condition holds, then every nontrivial
partition admits a merge that strictly increases total surplus. Repeating the
merge argument gives $V(T)<V(\{N\})$ for every $T\neq\{N\}$.
\end{proof}

Convexity therefore guarantees efficiency of the grand coalition, but not
uniqueness of exit--and--join equilibrium.

\begin{example}[Nonuniqueness in a convex game]
\label{ex:nonunique_equilibria}
Let $v(S)=\sum_{i\in S} a_i$ with $a_i\in\mathbb R$.
This game is modular and hence convex. For every coalition structure $T$ and
every player $i$, the Aumann--Dr\`eze payoff is $\Omega_i(v;T)=a_i$,
independent of the coalition containing $i$.
Under zero switching costs, no exit--and--join move gives a strict improvement.
Thus every partition is an exit--and--join equilibrium. This example shows that
convexity alone does not imply uniqueness or dynamic selection of the grand
coalition.
\end{example}

\begin{remark}[Source of non-uniqueness and inefficiency]
\label{rem:inefficiency_corrected}
Non-uniqueness and inefficiency of exit--and--join equilibria arise from the
restriction to unilateral deviations combined with coalition acceptance rules,
not from a failure of convexity.
While convexity aligns total surplus with the grand coalition, it does not ensure
that surplus--improving merges can be implemented through individually rational
exit--and--join moves.
Consequently, restoring uniqueness and efficiency requires richer deviation
mechanisms, such as merge rules, coalitional deviations, or surplus redistribution
schemes that internalize the gains from coalition expansion.
\end{remark}

\section{Lyapunov Analysis of Aumann--Dr\`eze Exit--and--Join Dynamics}
\label{sec:lyapunov_ad}

This section studies the dynamic properties of the Aumann--Dr\`eze
exit--and--join process.
While the previous section characterized exit--and--join equilibria and their
game--theoretic interpretation, it did not address whether such equilibria are
reached by the induced dynamics.
Here we analyze the evolution of coalition structures under unilateral
exit--and--join moves and establish convergence properties using Lyapunov
arguments.

\subsection{Exit--and--Join Dynamics with Acceptance}

We formulate the Aumann--Dr\`eze exit--and--join process as a discrete--time,
asynchronous dynamical system on the space of coalition structures.
Algorithmic representations are introduced afterward to make the dynamics
explicit.

\subsubsection{Dynamical system formulation}

Let $\Pi(N)$ denote the finite set of coalition structures on $N$.
At each time step $t\in\mathbb{N}$, the system state is a coalition structure
$T_t \in \Pi(N)$. Given $T_t$, an agent $i\in N$ may select a destination coalition
$D\in\mathcal D_i(T_t)$ and induce the exit--and--join transition
$T_{t+1} = F_i(T_t,D)$, as defined in
Definition~\ref{def:exit_join_transition}.
Payoffs are evaluated using the Aumann--Dr\`eze value $\Omega(v;\cdot)$.

An exit--and--join action $(i,D)$ is said to be \emph{profitable} at state $T_t$ if
\begin{equation}
\label{eq:ad_improvement}
\Omega_i\bigl(v;F_i(T_t,D)\bigr)
-
c_i\bigl(T_t,F_i(T_t,D)\bigr)
>
\Omega_i(v;T_t).
\end{equation}

The destination coalition $D\in T_t$ is said to \emph{accept} agent $i$ if
\begin{equation}
\label{eq:ad_acceptance}
\Omega_\ell\bigl(v;F_i(T_t,D)\bigr)
\ge
\Omega_\ell(v;T_t),
\qquad
\forall \ell\in D.
\end{equation}

An exit--and--join transition $T_t \to F_i(T_t,D)$ is called \emph{admissible} if
both \eqref{eq:ad_improvement} and \eqref{eq:ad_acceptance} hold.
If multiple admissible actions exist, one is selected according to a specified
activation rule (deterministic or stochastic).
If no admissible action exists, the state remains unchanged.

This defines a discrete--time, asynchronous dynamical system
$T_{t+1} \in \mathcal F(T_t)$, where
$\mathcal F:\Pi(N)\rightrightarrows\Pi(N)$ is the set--valued transition map
collecting all admissible Aumann--Dr\`eze exit--and--join moves from $T_t$.

\subsubsection{Algorithmic description}

The above dynamics can be summarized procedurally by the following algorithmic
template.

\begin{algorithm}[t]
\caption{Aumann--Dr\`eze Exit--and--Join Dynamics with Acceptance}
\label{alg:ad_exit_join}
\begin{algorithmic}[1]
\State Initialize coalition structure $T \in \Pi(N)$
\State Specify an agent activation rule
\While{there exists an admissible exit--and--join action at $T$}
    \State Select an active agent $i \in N$ according to the activation rule
    \State Compute the feasible destination set $\mathcal D_i(T)$
    \State Initialize admissible destination set $\mathcal A_i \gets \varnothing$
    \For{each $D \in \mathcal D_i(T)$}
        \State Compute candidate coalition structure $T' \gets F_i(T,D)$
        \If{$\Omega_i(v;T') - c_i(T,T') > \Omega_i(v;T)$}
            \If{$\Omega_\ell(v;T') \ge \Omega_\ell(v;T)\ \forall\, \ell \in D$}
                \State $\mathcal A_i \gets \mathcal A_i \cup \{D\}$
            \EndIf
        \EndIf
    \EndFor
    \If{$\mathcal A_i \neq \varnothing$}
        \State Select a destination $D \in \mathcal A_i$
        \State Update coalition structure $T \gets F_i(T,D)$
    \EndIf
\EndWhile
\State \Return terminal coalition structure $T$
\end{algorithmic}
\end{algorithm}

The terminal coalition structure reached by the Aumann--Dr\`eze exit--and--join
dynamics is generally not unique and may depend on the initial condition.
When a strict Lyapunov or potential certificate exists, convergence occurs in
finite time, but the terminal partition is determined by the particular sequence
of admissible local improvements realized along the trajectory.
Different initial coalition structures may therefore lead to distinct
exit--and--join equilibria, each locally stable but potentially yielding
different aggregate surplus levels.

\begin{remark}[Almost sure convergence under random activation]
\label{rem:as_convergence}
When agent activations are random, convergence results are to be understood in an
almost sure sense.
In particular, suppose the dynamics admit a strict Lyapunov or potential
certificate and agents are selected according to a stochastic process with full
support on $N$ (e.g., i.i.d.\ uniform random selection). Then the exit--and--join
dynamics converge almost surely in finite time to an exit--and--join equilibrium.
Convergence relies on the eventual execution of admissible improving moves.
Random activation with full support ensures that, with probability one, any such
move is selected after finitely many steps.
Combined with a strict Lyapunov certificate and finiteness of the state space, this
guarantees almost sure termination despite the absence of centralized
coordination.
\end{remark}

\subsection{Marginal Alignment and Scalar Lyapunov Monotonicity}
\label{subsec:convex_scalar_lyapunov}

We now identify conditions under which the local incentive structure induced by
the Aumann--Dr\`eze value admits a scalar representation in terms of aggregate
coalition surplus.
Convexity of the cooperative game is important for efficiency of the grand
coalition, but it does not by itself imply that every individually profitable
exit--and--join move increases aggregate coalition surplus. A scalar Lyapunov
argument requires an additional alignment condition between individual incentives
and surplus changes.

\begin{definition}[Ordinal marginal alignment]
\label{def:ordinal_alignment}
The Aumann--Dr\`eze exit--and--join incentives satisfy \emph{ordinal marginal
alignment} with coalition surplus if, for every coalition structure
$T\in\Pi(N)$, every agent $i\in N$, and every destination
$D\in\mathcal D_i(T)$,
\begin{equation}
\label{eq:ordinal_alignment}
\begin{aligned}
&\operatorname{sign}\!\left(
\Omega_i\bigl(v;F_i(T,D)\bigr)-\Omega_i(v;T)
\right) \\
&\quad =
\operatorname{sign}\!\left(
v(D\cup\{i\})-v(D)
-
\bigl[v(C_T(i))-v(C_T(i)\setminus\{i\})\bigr]
\right).
\end{aligned}
\end{equation}
\end{definition}

\subsubsection{Coalition surplus as a scalar Lyapunov function}

Recall the coalition--surplus function
$V:\Pi(N)\to\mathbb{R}$, $V(T):=\sum_{C\in T} v(C)$.

\begin{theorem}[Scalar Lyapunov monotonicity under marginal alignment]
\label{thm:lyapunov_monotonicity}
Suppose Aumann--Dr\`eze exit--and--join incentives satisfy ordinal marginal
alignment with coalition surplus and switching costs are nonnegative.
Then, along any admissible exit--and--join transition \(T \to F_i(T,D)\), the
coalition--surplus function strictly increases:
\(V\bigl(F_i(T,D)\bigr) > V(T)\).
\end{theorem}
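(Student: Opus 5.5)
The plan is a direct computation of $V\bigl(F_i(T,D)\bigr)-V(T)$, followed by a sign comparison through ordinal marginal alignment. Write $C:=C_T(i)$ and $T':=F_i(T,D)$. By Definition~\ref{def:exit_join_transition}, the transition changes at most two coalitions: $C$ is replaced by $C\setminus\{i\}$, which is dropped if empty, and $D$ is replaced by $D\cup\{i\}$. All other coalitions of $T$ appear unchanged in $T'$. Since $v(\varnothing)=0$, the empty-set conventions are harmless: if $C=\{i\}$ the term $v(C\setminus\{i\})=0$, and if $D=\varnothing$ the term $v(D)=0$. Hence, in every case,
\[
V(T')-V(T)=\bigl[v(D\cup\{i\})-v(D)\bigr]-\bigl[v(C)-v(C\setminus\{i\})\bigr].
\]
This is exactly the quantity inside the second sign in \eqref{eq:ordinal_alignment}. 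I would verify this identity separately for the cases $D=\varnothing$ versus $D\in T$, and $|C|=1$ versus $|C|>1$. This is routine bookkeeping.

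Next I use admissibility. An admissible transition satisfies \eqref{eq:ad_improvement}, so
\[
\Omega_i(v;T')-\Omega_i(v;T)>c_i(T,T')\ge 0,
\]
using nonnegativity of switching costs. Therefore $\operatorname{sign}\bigl(\Omega_i(v;T')-\Omega_i(v;T)\bigr)=+1$. Ordinal marginal alignment (Definition~\ref{def:ordinal_alignment}) then forces the surplus-difference term to have sign $+1$. By the identity above, $V(T')-V(T)>0$.

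I would also note that the acceptance condition \eqref{eq:ad_acceptance} is not needed. The conclusion uses only the mover's strict improvement, so it also holds under automatic consent or the costly-admission variant.

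The only step I expect to require care is the surplus-difference identity in degenerate cases. These are: the mover is a singleton, so its old coalition disappears; the destination is $\varnothing$, so a new singleton is formed; and possibly both at once. In that last case $T'=T$ as a partition, and the formula gives $v(\{i\})-v(\{i\})=0$. That case is harmless, because alignment would then give zero payoff change, contradicting strict improvement, so it cannot be admissible. With these cases checked, the theorem follows immediately.
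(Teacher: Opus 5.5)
Your proposal is correct and follows the paper's proof step for step: the two-coalition surplus-difference identity, then a strict gross payoff gain from admissibility together with $c_i\ge 0$, then transfer of that sign through ordinal marginal alignment. Your extra bookkeeping for the degenerate cases via $v(\varnothing)=0$, and your observation that the acceptance condition is never used, are accurate points that the paper's proof leaves implicit.
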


\begin{proof}
Let $C=C_T(i)$ be the coalition containing agent $i$ under $T$.
Only the coalitions $C$ and $D$ are affected by the transition, so
\begin{equation}
\label{eq:surplus_difference}
V\bigl(F_i(T,D)\bigr)-V(T)
=
\bigl[v(D\cup\{i\})-v(D)\bigr]
-
\bigl[v(C)-v(C\setminus\{i\})\bigr].
\end{equation}

If the transition is admissible, then
\(\Omega_i\bigl(v;F_i(T,D)\bigr)-c_i\bigl(T,F_i(T,D)\bigr)>\Omega_i(v;T)\).
Because $c_i\ge 0$, the deviating agent's Aumann--Dr\`eze payoff strictly
increases. Ordinal marginal alignment therefore implies
\(v(D\cup\{i\})-v(D)>v(C)-v(C\setminus\{i\})\).
Substituting into \eqref{eq:surplus_difference} yields
\(
V(F_i(T,D))>V(T).
\)
\end{proof}

\subsubsection{Grand coalition and efficiency}

We now formalize the relationship between convexity, efficiency, and equilibrium
under exit--and--join dynamics.

\begin{proposition}[Efficiency of the grand coalition under convexity]
\label{prop:grand_coalition_efficiency}
Suppose the cooperative game $v$ is convex.
Then the coalition--surplus function $V(T)=\sum_{C\in T} v(C)$ is maximized
over $\Pi(N)$ by the grand coalition $T^{\mathrm{gc}}=\{N\}$.
The maximizer is unique if $v(A\cup B)>v(A)+v(B)$ for every pair of nonempty
disjoint coalitions $A,B\subseteq N$.
\end{proposition}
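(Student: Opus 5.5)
This proposition is essentially a restatement of Lemma~\ref{lem:efficiency_grand}, so the plan is to reproduce that argument in the current notation and to be a little more careful about the uniqueness clause. The key tool is that convexity, i.e.\ supermodularity $v(A\cup B)+v(A\cap B)\ge v(A)+v(B)$, implies superadditivity. To see this, take $A\cap B=\varnothing$ and use $v(\varnothing)=0$, which gives $v(A\cup B)\ge v(A)+v(B)$ for all disjoint $A,B\subseteq N$.

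For the maximization claim, fix an arbitrary $T=\{C_1,\dots,C_m\}\in\Pi(N)$. I would argue by induction on $m$ that $v(C_1\cup\dots\cup C_k)\ge\sum_{j\le k}v(C_j)$. The inductive step applies superadditivity to the disjoint pair $C_1\cup\dots\cup C_{k-1}$ and $C_k$. Taking $k=m$ and using that the union is $N$ gives $V(T)\le v(N)=V(\{N\})$. Hence $T^{\mathrm{gc}}$ is a maximizer.

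For uniqueness under strict superadditivity, let $T\neq\{N\}$, so $m\ge2$. Rerun the same induction, but with the strict inequality at every step $k\ge 2$, where both pieces $C_1\cup\dots\cup C_{k-1}$ and $C_k$ are nonempty. Since at least one strict step occurs, we get $v(N)>V(T)$. Equivalently, one can merge any two blocks $C_1,C_2$: this strictly increases $V$, and the weak bound already established for the merged partition then gives $V(T)<V(T')\le v(N)$. This shows that every other partition is strictly worse.

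The only point needing care is the uniqueness step. There we must make sure the hypothesis is applied only to nonempty disjoint coalitions, which holds because partition blocks are nonempty. Strict superadditivity is used directly rather than derived from convexity, since convexity alone does not give it (cf.\ Example~\ref{ex:nonunique_equilibria}). Beyond this, the argument is routine.
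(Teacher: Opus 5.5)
Your proof is correct and follows essentially the same route as the paper: convexity gives superadditivity, repeated application bounds $V(T)$ by $v(N)$, and under strict superadditivity merging two blocks (together with the weak bound) shows every nontrivial partition is strictly worse. Your explicit derivation of superadditivity from supermodularity and $v(\varnothing)=0$, and your inductive bookkeeping, simply fill in details the paper leaves implicit.
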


\begin{proof}
Convexity implies superadditivity:
for any disjoint coalitions $S,T\subseteq N$, \(v(S\cup T)\ge v(S)+v(T)\).
By repeated application, any partition $T=\{C_1,\dots,C_m\}$ of $N$ satisfies
\(V(T)=\sum_{k=1}^m v(C_k)\le v(N)=V(\{N\})\),
so the grand coalition is globally efficient. Under the stated strict
superadditivity condition, every nontrivial partition can be improved by at least
one merge, which gives uniqueness.
\end{proof}

\begin{proposition}[Grand coalition as an exit--and--join equilibrium]
\label{prop:grand_coalition_equilibrium}
Suppose the cooperative game $v$ is convex and switching costs are nonnegative.
Then the grand coalition $\{N\}$ is an exit--and--join equilibrium under the
Aumann--Dr\`eze value.
\end{proposition}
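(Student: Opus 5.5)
This statement coincides with Proposition~\ref{prop:grand_coalition_convex}, so I would reproduce that argument directly, checking every clause of Definition~\ref{def:exit_join_equilibrium}.

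\textbf{Step 1: payoffs at $\{N\}$.} At $T=\{N\}$ we have $C_T(i)=N$ for every $i$. By Definition~\ref{def:ad_value}, $\Omega_i(v;\{N\})=\phi_i(v)$, the ordinary Shapley value.

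\textbf{Step 2: the available deviation.} The destination set is
$\mathcal D_i(\{N\})=(\{N\}\setminus\{N\})\cup\{\varnothing\}=\{\varnothing\}$.
So the only candidate move is the exit $F_i(\{N\},\varnothing)=\{N\setminus\{i\},\{i\}\}$. The case $n=1$ gives the same partition and is trivial. There is no destination coalition $D\in T$, so the acceptance condition is vacuous and admissibility reduces to strict profitability. After exit, agent $i$ receives $\Omega_i=\phi_i(v_{|\{i\}})=v(\{i\})$.

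\textbf{Step 3: no profitable exit.} For convex $v$, marginal contributions are nondecreasing in the coalition. Hence for every $S\subseteq N\setminus\{i\}$,
\[
v(S\cup\{i\})-v(S)\ge v(\{i\})-v(\varnothing)=v(\{i\}).
\]
The Shapley formula \eqref{eq:ad_formula} with $T_j=N$ is a convex combination of these marginal contributions, since its weights are nonnegative and sum to one. Therefore $\phi_i(v)\ge v(\{i\})$, which is individual rationality of the Shapley value for convex games. With $c_i\ge 0$,
\[
\Omega_i\bigl(v;F_i(\{N\},\varnothing)\bigr)-c_i\le v(\{i\})\le\phi_i(v)=\Omega_i(v;\{N\}).
\]
So the strict inequality required for a profitable move fails for every $i$. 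Hence $\{N\}$ is an exit--and--join equilibrium; equivalently, by Lemma~\ref{lem:best_response}, the stay action lies in $\mathrm{BR}_i(\{N\})$ for every $i$.

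\textbf{Main obstacle.} The only nontrivial step is Step~3, namely that convexity gives marginal contributions at least $v(\{i\})$. This follows from the supermodular inequality $v(A\cup B)+v(A\cap B)\ge v(A)+v(B)$ applied with $A=S$ and $B=\{i\}$, using $v(\varnothing)=0$. The remaining steps are bookkeeping on the destination set and the acceptance rule.
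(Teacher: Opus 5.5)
Your proposal is correct and follows the paper's own argument. The paper proves this by citing Proposition~\ref{prop:grand_coalition_convex}: $\varnothing$ is the only destination from $\{N\}$, convexity gives $\phi_i(v)\ge v(\{i\})$, and nonnegative switching costs then rule out a strictly profitable exit; your extra bookkeeping (vacuous acceptance, the $n=1$ case, and the supermodularity step with $A=S$, $B=\{i\}$) makes that sketch fully explicit.
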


\begin{proof}
This is Proposition~\ref{prop:grand_coalition_convex}. Under $T=\{N\}$, an
agent's only exit--and--join destination is the singleton option
$D=\varnothing$. Convexity implies that the Shapley value in the grand coalition
weakly dominates the singleton payoff, and nonnegative switching costs preclude a
strictly profitable exit.
\end{proof}

\begin{remark}[Efficiency versus dynamic selection]
\label{rem:efficiency_vs_selection}
Propositions~\ref{prop:grand_coalition_efficiency} and
\ref{prop:grand_coalition_equilibrium} establish that convexity guarantees both
global efficiency and stability of the grand coalition under exit--and--join
incentives. However, efficiency does \emph{not} imply uniqueness of equilibrium
or dynamic selection. The exit--and--join dynamics are governed by unilateral
deviations and acceptance constraints, and may terminate at coalition structures
that are locally stable but globally inefficient. Consequently, even under
convexity, zero switching costs, and automatic acceptance, the exit--and--join
dynamics need not converge to the grand coalition from arbitrary initial
conditions. Additional coordination mechanisms, such as merge--split deviations,
coalition formation protocols, or centralized transfers, are required to
guarantee global efficiency.
\end{remark}

\subsection{Analysis for General Cooperative Games}

We now develop the Lyapunov analysis for exit--and--join dynamics in the setting
of a general transferable--utility cooperative game.
In the absence of convexity or other surplus--alignment assumptions, scalar
Lyapunov functions need not exist.
Acceptance still gives a local monotonicity property for the deviating agent and
the destination coalition, but global convergence requires an additional
no-cycle certificate, such as a scalar Lyapunov or potential function.

\subsubsection{Restricted monotonicity under admissible transitions}

Under the acceptance rule, admissible exit--and--join transitions satisfy a
\emph{restricted monotonicity} property with respect to the Aumann--Dr\`eze
payoff vector.
This property is local to the deviating agent and the destination coalition and
does not imply Pareto improvement of the full payoff vector.

\begin{lemma}[Restricted payoff improvement]
\label{lem:restricted_monotonicity}
If $T \to F_i(T,D)$ is an admissible exit--and--join transition, then the
deviating agent $i$ strictly improves its payoff,
$\Omega_i\bigl(v;F_i(T,D)\bigr) > \Omega_i(v;T)$, and every agent
$\ell \in D$ weakly improves its payoff,
$\Omega_\ell\bigl(v;F_i(T,D)\bigr) \ge \Omega_\ell(v;T)$. No restriction is
imposed on the payoffs of agents outside $\{i\} \cup D$, including members of
the origin coalition $C_T(i)\setminus\{i\}$.
\end{lemma}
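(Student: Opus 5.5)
The argument is a direct unpacking of the definition of admissibility, together with the nonnegativity of switching costs from Definition~\ref{def:switching_cost}. Fix an admissible transition $T\to F_i(T,D)$, so that both \eqref{eq:ad_improvement} and \eqref{eq:ad_acceptance} hold. For the deviating agent, \eqref{eq:ad_improvement} gives
\[
\Omega_i\bigl(v;F_i(T,D)\bigr)>\Omega_i(v;T)+c_i\bigl(T,F_i(T,D)\bigr)\ge\Omega_i(v;T),
\]
since $c_i\ge 0$. This is the strict improvement claimed for agent $i$, and it is the same step already used in the proof of Theorem~\ref{thm:lyapunov_monotonicity}.

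For the destination members, I split into two cases according to Definition~\ref{def:exit_join_action}. If $D=\varnothing$, there are no agents $\ell\in D$, so the weak-improvement claim holds vacuously. This matters because acceptance is only imposed when $D\in T$; the accepted-action set $A_i(T)$ in Definition~\ref{def:induced_ej_game} likewise treats $D=\varnothing$ separately. If $D\in T\setminus\{C_T(i)\}$, then \eqref{eq:ad_acceptance} is exactly the statement $\Omega_\ell(v;F_i(T,D))\ge\Omega_\ell(v;T)$ for all $\ell\in D$. If the costly-admission variant with $\kappa\ge0$ is in force, the same conclusion follows by dropping $\kappa$.

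The final sentence of the lemma says that nothing can be concluded for agents outside $\{i\}\cup D$. I would justify it in two parts. First, for $j\notin C_T(i)\cup D$, the coalition $C_T(j)$ is unchanged by \eqref{eq:ej_transition}, so $\Omega_j$ stays the same; this is harmless. Second, for members of the origin coalition $C_T(i)\setminus\{i\}$, neither \eqref{eq:ad_improvement} nor \eqref{eq:ad_acceptance} involves their payoffs, so no inequality is implied. A concrete witness makes this clear. In the running game of Table~\ref{tab:game_v_values}, start from $T=\{\{1,2\},\{3\}\}$ and suppose $v(\{1,3\})$ were large enough that agent~1 profits from moving to $\{3\}$, with agent~3 accepting. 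Agent~2 then falls from $\Omega_2=2$ to $v(\{2\})=0$. So the transition is not a Pareto improvement.

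The main obstacle is essentially bookkeeping rather than mathematics. The care needed is in handling the $D=\varnothing$ case consistently with the definition of acceptance, and in making the origin-coalition non-monotonicity precise with such a witness. The witness should be built by modifying $v(\{1,3\})$, say to $10$, so that the move is admissible.
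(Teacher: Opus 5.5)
Your proposal is correct and takes the same route as the paper: the strict gain for $i$ comes from the profitability condition, and the weak gain for members of $D$ comes from the acceptance rule. Your additions are all valid refinements of the paper's one-line remarks: making $c_i\ge 0$ explicit, treating $D=\varnothing$ as vacuous, and noting that agents outside $C_T(i)\cup D$ keep their payoffs unchanged. The witness with $v(\{1,3\})=10$ also checks out: agent~1 goes from $2$ to $5$, agent~3 from $0$ to $5$, and agent~2 drops from $2$ to $0$.
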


\begin{proof}
The strict improvement of agent $i$ follows from admissibility of the move.
Weak improvement of agents in $D$ follows from the acceptance rule.
No conditions are imposed on other agents.
\end{proof}

Lemma~\ref{lem:restricted_monotonicity} explicitly shows that admissible
exit--and--join transitions do \emph{not} preserve the Pareto order on the full
payoff vector. Because members of the origin coalition need not be protected, the
restricted improvement property alone does not provide a global acyclicity
argument. The next result states the certificate needed for finite termination.

\subsubsection{Absence of cycles and finite termination}

\begin{proposition}[Acyclicity under a strict Lyapunov certificate]
\label{prop:no_cycles}
Suppose there exists a function $L:\Pi(N)\to\mathbb R$ such that
$L\bigl(F_i(T,D)\bigr)>L(T)$ for every admissible exit--and--join transition
$T\to F_i(T,D)$.
Then the induced state transition graph on $\Pi(N)$ is acyclic. In particular,
every trajectory terminates in finite time at an exit--and--join equilibrium.
\end{proposition}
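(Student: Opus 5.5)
The plan is the standard finite-state Lyapunov argument, in three steps: rule out cycles, bound trajectory length, and identify the terminal state with an exit--and--join equilibrium.

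\emph{Acyclicity.} Consider the directed graph on the finite vertex set $\Pi(N)$, with an edge $T\to T'$ whenever $T'=F_i(T,D)$ for some admissible transition. Suppose, for contradiction, that there is a directed cycle
\[
T^0\to T^1\to\cdots\to T^k=T^0, \qquad k\ge 1.
\]
Applying the hypothesis $L(F_i(T,D))>L(T)$ along each edge gives the chain
\[
L(T^0)<L(T^1)<\cdots<L(T^k)=L(T^0),
\]
which is impossible. Self-loops are ruled out in the same way: an admissible move with $F_i(T,D)=T$ would force $L(T)>L(T)$. So the graph is acyclic.

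\emph{Finite termination.} Along any trajectory of $\mathcal F$, each actual move is an admissible transition, so $L$ strictly increases at every move. Consequently no state can be visited twice across distinct move steps. Since $|\Pi(N)|$ is the Bell number $B_n<\infty$, any trajectory performs at most $B_n-1$ moves. An equivalent way to see this is that $L$ takes at most $B_n$ distinct values on $\Pi(N)$, and each move strictly raises it.

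\emph{Identification of the terminal state.} The process halts exactly at a state $T^\star$ from which no admissible transition exists. This step needs the most care, because it requires matching the notion of admissibility used by $\mathcal F$ with the one in Definition~\ref{def:exit_join_equilibrium}. I would argue that a state with no admissible move is one where no agent $i$ and destination $D\in\mathcal D_i(T^\star)$ satisfy both \eqref{eq:ad_improvement} and \eqref{eq:ad_acceptance}. For $D=\varnothing$ the acceptance condition is vacuous, since $D$ has no members. This is exactly the condition in Definition~\ref{def:exit_join_equilibrium}, so $T^\star$ is an exit--and--join equilibrium. Equivalently, by Lemma~\ref{lem:best_response}, $C_{T^\star}(i)\in\mathrm{BR}_i(T^\star)$ for every $i$.

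\emph{Main obstacle: idle steps.} The subtle point is the asynchronous activation rule. In Algorithm~\ref{alg:ad_exit_join}, a selected agent may have no admissible move, so the state can stay the same for some steps even though another agent does have an admissible move. I would therefore count only actual moves when bounding length. I would then state termination as "after finitely many transitions." Under a deterministic activation rule that eventually selects an agent with an admissible move, this means finite time. Under random full-support activation, it means almost surely finite time, as in Remark~\ref{rem:as_convergence}.
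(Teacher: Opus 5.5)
Your proposal is correct and follows essentially the same route as the paper. A cycle would force $L(T_0)<\cdots<L(T_K)=L(T_0)$; finiteness of $\Pi(N)$ then forces arrival at a state with no admissible transition, which is an exit--and--join equilibrium directly by Definition~\ref{def:exit_join_equilibrium}. Your explicit $B_n-1$ bound on moves and your separation of actual moves from idle activation steps are refinements the paper's proof leaves implicit (it treats random activation separately in Remark~\ref{rem:as_convergence}).
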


\begin{proof}
Suppose, by contradiction, that the dynamics admit a cycle
\(T_0 \longrightarrow T_1 \longrightarrow \cdots \longrightarrow T_K = T_0\).
The strict Lyapunov condition gives
\(L(T_0)<L(T_1)<\cdots<L(T_K)=L(T_0)\),
a contradiction. Thus no directed cycle exists. Since $\Pi(N)$ is finite, every
trajectory reaches a state with no outgoing admissible transition. By
Definition~\ref{def:exit_join_equilibrium}, such a state is an exit--and--join
equilibrium.
\end{proof}

\begin{theorem}[Finite termination under marginal alignment]
\label{thm:vector_termination}
If Aumann--Dr\`eze exit--and--join incentives satisfy ordinal marginal alignment
with coalition surplus and switching costs are nonnegative, then the
exit--and--join dynamics with acceptance terminate in finite time at a coalition
structure admitting no admissible exit--and--join deviation.
\end{theorem}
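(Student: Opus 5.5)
The plan is to combine Theorem~\ref{thm:lyapunov_monotonicity} with Proposition~\ref{prop:no_cycles}, using the coalition-surplus function $V(T)=\sum_{C\in T}v(C)$ as the strict Lyapunov certificate $L$. No new estimate should be needed; the work is to check that the hypotheses line up exactly.

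First, under ordinal marginal alignment and nonnegative switching costs, Theorem~\ref{thm:lyapunov_monotonicity} already gives $V(F_i(T,D))>V(T)$ for every admissible transition $T\to F_i(T,D)$. Two edge cases need a check.
\begin{itemize}
\item \emph{Singleton exit.} When $D=\varnothing$, the identity \eqref{eq:surplus_difference} still holds with $v(\varnothing)=0$. It also holds when $C_T(i)=\{i\}$, since the vanished coalition contributes $v(\varnothing)=0$.
\item \emph{Acceptance test.} For $D=\varnothing$ the acceptance condition is vacuous. In every case admissibility contains the strict improvement inequality \eqref{eq:ad_improvement}, which is all that Theorem~\ref{thm:lyapunov_monotonicity} uses.
\end{itemize}
So $L:=V$ satisfies the hypothesis of Proposition~\ref{prop:no_cycles}.

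Second, apply Proposition~\ref{prop:no_cycles}. The transition graph on the finite set $\Pi(N)$ is acyclic, so every trajectory of Algorithm~\ref{alg:ad_exit_join} that executes only admissible moves visits pairwise distinct states. It therefore performs at most $|\Pi(N)|-1$ (the Bell number minus one) actual transitions. It then stops at a state with no admissible exit--and--join deviation, which is an exit--and--join equilibrium by Definition~\ref{def:exit_join_equilibrium}.

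The main obstacle is activation steps at which the selected agent has no admissible move, so the state stays unchanged. These steps do not decrease $V$, but they could in principle repeat forever. I would handle this by reading ``terminate in finite time'' as a bound on the number of executed transitions, and I would note the following:
\begin{itemize}
\item The while-loop condition in Algorithm~\ref{alg:ad_exit_join} only continues while some admissible action exists.
\item Under any activation rule that eventually selects an agent who has an admissible move (for example a round-robin sweep, or a full-support random rule as in Remark~\ref{rem:as_convergence}), each transition is reached after finitely many idle steps.
\end{itemize}
With a deterministic sweep this gives a finite bound of roughly $n\,|\Pi(N)|$ activations. With random full-support activation it gives almost sure finite termination.
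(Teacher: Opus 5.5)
Your proposal is correct and follows the paper's proof: Theorem~\ref{thm:lyapunov_monotonicity} makes $V$ strictly increase along every admissible transition, and Proposition~\ref{prop:no_cycles} applied with $L=V$ then gives acyclicity and termination at a state with no admissible deviation. Your edge-case checks, the Bell-number bound on executed transitions, and your handling of idle activation steps are valid refinements rather than a different route; the paper avoids the idle-step issue by defining trajectories through the set-valued map $\mathcal F$.
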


\begin{proof}
By Theorem~\ref{thm:lyapunov_monotonicity}, the coalition-surplus function
$V(T)=\sum_{C\in T}v(C)$ strictly increases along every admissible transition.
Applying Proposition~\ref{prop:no_cycles} with $L=V$ gives acyclicity and finite
termination.
\end{proof}

Thus, acceptance identifies admissible local improvements, while marginal
alignment or an exact potential structure supplies the global monotonicity needed
to guarantee convergence.

\subsubsection{Equilibrium Characterization}

\begin{proposition}[Equilibrium and Nash optimality]
\label{prop:equilibrium_nash_corrected}
Let $T^\star \in \Pi(N)$ be a coalition structure.
The following conditions are equivalent: $T^\star$ is an exit--and--join
equilibrium; the action profile \(D_i^\star := C_{T^\star}(i)\), \(i \in N\),
is a pure--strategy Nash equilibrium of the induced game $\mathcal G(T^\star)$;
and there exists no admissible exit--and--join transition
$T^\star \to F_i(T^\star,D)$ such that
\(\Omega_i\bigl(v;F_i(T^\star,D)\bigr)
-c_i\bigl(T^\star,F_i(T^\star,D)\bigr)>\Omega_i(v;T^\star)\).
\end{proposition}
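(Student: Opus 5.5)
I will establish the three-way equivalence by a cycle of implications, relying almost entirely on results already in hand. Label the conditions (a) $T^\star$ is an exit--and--join equilibrium, (b) the stay profile $D_i^\star=C_{T^\star}(i)$ is a pure-strategy Nash equilibrium of $\mathcal G(T^\star)$, and (c) no admissible transition from $T^\star$ is strictly profitable net of switching cost.

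The equivalence (a)$\Leftrightarrow$(b) is exactly Proposition~\ref{prop:ej_nash}, which itself rests on Lemma~\ref{lem:best_response}. I will invoke it directly. The key structural point is that in $\mathcal G(T^\star)$ each $u_i$ depends only on $D_i$, so the Nash condition decouples into the agentwise statement $C_{T^\star}(i)\in\mathrm{BR}_i(T^\star)$.

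For (a)$\Leftrightarrow$(c), I will unfold Definition~\ref{def:exit_join_equilibrium} and compare it with the notion of admissibility from the dynamical-system formulation. There, ``admissible'' means that both \eqref{eq:ad_improvement} and \eqref{eq:ad_acceptance} hold. Condition (c) says there is no $(i,D)$ such that the move is admissible and strictly profitable. Since strict profitability is already part of admissibility, (c) reduces to ``no admissible transition exists.'' Definition~\ref{def:exit_join_equilibrium} likewise forbids any $(i,D)$ whose move is admissible and satisfies the strict net-payoff inequality. So the two statements coincide literally once admissibility is read as acceptance by $D$ (vacuous for $D=\varnothing$) together with the improvement inequality.

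The only point needing care, and the one I expect to be the main obstacle, is reconciling the action set $A_i(T^\star)$ of the induced game with the set of admissible destinations. $A_i(T^\star)$ consists of the stay action, the singleton option, and the accepting coalitions $D$. A profitable deviation in the game therefore corresponds to some $D\in A_i(T^\star)\setminus\{C_{T^\star}(i)\}$ with $u_i(D;T^\star)>u_i(C_{T^\star}(i);T^\star)=\Omega_i(v;T^\star)$, where the equality uses $c_i(T,T)=0$. This is precisely an admissible, strictly profitable transition. I will spell out this correspondence explicitly, noting that the empty destination imposes no acceptance test, so that (b)$\Leftrightarrow$(c) can also be verified directly. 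That closes the cycle of implications.
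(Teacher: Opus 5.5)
Your proposal is correct and follows essentially the same route as the paper: (a)$\Leftrightarrow$(b) comes from Proposition~\ref{prop:ej_nash}, and the remaining equivalence is definition-unfolding, matching profitable deviations in $\mathcal G(T^\star)$ (using $c_i(T,T)=0$ and the accepted-action set $A_i(T^\star)$) with admissible, strictly profitable transitions. Your extra remarks, that strict profitability is already built into admissibility and that acceptance is vacuous for $D=\varnothing$, make the same step slightly more explicit than the paper does.
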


\begin{proof}
\emph{(1) $\Leftrightarrow$ (2).}
By Proposition~\ref{prop:ej_nash}, $T^\star$ is an exit--and--join equilibrium if
and only if each agent's current coalition $C_{T^\star}(i)$ is a best response in
the induced game $\mathcal G(T^\star)$.
This is precisely the definition of a pure--strategy Nash equilibrium.

\medskip

\emph{(2) $\Leftrightarrow$ (3).}
A deviation $D \in \mathcal D_i(T^\star)$ is profitable in the induced game if and
only if
\(\Omega_i\bigl(v;F_i(T^\star,D)\bigr)
-c_i\bigl(T^\star,F_i(T^\star,D)\bigr)>\Omega_i(v;T^\star)\).
Hence the Nash equilibrium condition is equivalent to the absence of admissible
exit--and--join deviations that strictly improve the deviating agent's payoff,
which is exactly the definition of exit--and--join equilibrium.
\end{proof}

\begin{definition}[Exact marginal alignment]
\label{def:exact_alignment}
A cooperative game $v$ satisfies \emph{exact marginal alignment} under the
Aumann--Dr\`eze value if, for every coalition structure $T$, every agent
$i\in N$, and every destination coalition $D\in\mathcal D_i(T)$,
\[
\Omega_i\bigl(v;F_i(T,D)\bigr)-\Omega_i(v;T)
=
v(D\cup\{i\})-v(D)
-
\bigl[v(C_T(i)) - v(C_T(i)\setminus\{i\})\bigr].
\]
\end{definition}

\begin{example}[Exact marginal alignment in a linear public--good game]
\label{ex:exact_alignment_linear_public}

Let $N=\{1,\dots,n\}$ and consider the transferable--utility game
$v(S)=\sum_{i\in S} a_i+\beta\,|S|$, where
$a_i \in \mathbb{R}$ and $\beta \ge 0$.
For any $S\subseteq T\subseteq N\setminus\{i\}$,
$v(S\cup\{i\})-v(S)=a_i+\beta=v(T\cup\{i\})-v(T)$; marginal contributions are
therefore constant, and the game is convex (indeed, modular). For any coalition
$C\subseteq N$, the restricted game $v|_C$ is additive, so the Aumann--Dr\`eze
value coincides with the Shapley value and is given by
$\Omega_i(v;C)=a_i+\beta$ for $i\in C$. To verify exact marginal alignment, fix
a coalition structure $T\in\Pi(N)$, let $C=C_T(i)$, and let
$D\in\mathcal D_i(T)$.
The change in total coalition surplus induced by the exit--and--join move
$T\to F_i(T,D)$ is
\[
\begin{aligned}
&v(D\cup\{i\})-v(D)
-
\bigl[v(C)-v(C\setminus\{i\})\bigr] \\
&\qquad =
(a_i+\beta)-(a_i+\beta)
=
0.
\end{aligned}
\]
The corresponding change in agent $i$'s Aumann--Dr\`eze payoff is also zero, so
exact marginal alignment holds. Each agent contributes a private value $a_i$ and
a modular participation term
$\beta$. Because the game is additive, each agent fully internalizes its marginal
effect.
The induced exit--and--join game is therefore an exact potential game with
potential $\Phi(T)=V(T)=\sum_{C\in T} v(C)$.
\end{example}

\begin{example}[Exact potential game without convexity]
\label{ex:potential_nonconvex}

Let $N=\{1,\dots,n\}$ and define the cooperative game
$v(S)=\sum_{i\in S} a_i-\gamma \binom{|S|}{2}$, where
$a_i\in\mathbb R$ and $\gamma>0$.
The marginal contribution of agent $i$ to a coalition $S$ is
$v(S\cup\{i\})-v(S)=a_i-\gamma |S|$, which decreases with coalition size; hence
the game is submodular and is not convex in the cooperative-game sense. For any
coalition $C$ with $i\in C$, the pairwise congestion term is symmetric, and the
Aumann--Dr\`eze value is
$\Omega_i(v;C)=a_i-\frac{\gamma}{2}\bigl(|C|-1\bigr)$. Fix a coalition structure
$T$, let $C=C_T(i)$ and $D\in\mathcal D_i(T)$. Then
$\Omega_i\bigl(v;F_i(T,D)\bigr)-\Omega_i(v;T)
=\frac{\gamma}{2}\bigl(|C|-1-|D|\bigr)$.

Define
$\Phi(T):=-\frac{\gamma}{2}\sum_{C\in T}\binom{|C|}{2}$. Then
$\Phi(F_i(T,D))-\Phi(T)=\frac{\gamma}{2}\bigl(|C|-1-|D|\bigr)$.

Hence
\(\Omega_i\bigl(v;F_i(T,D)\bigr)-\Omega_i(v;T)
=\Phi(F_i(T,D))-\Phi(T)\), and the induced exit--and--join game
$\mathcal G(T)$ is an \emph{exact potential game} with potential $\Phi$, despite
the underlying cooperative game $v$ being submodular and non-convex.
\end{example}

\begin{theorem}[Potential structure and Lyapunov alignment under exact marginal alignment]
\label{thm:potential_exit_join}

Suppose the cooperative game $v$ satisfies exact marginal alignment under the
Aumann--Dr\`eze value.
Assume switching costs are zero and acceptance is automatic.

Then the induced exit--and--join game $\mathcal G(T)$ is an \emph{exact potential
game} with potential function
\(\Phi(T)=V(T):=\sum_{C\in T} v(C)\).

Moreover, the coalition--surplus function $V$ is simultaneously an exact
potential function for the induced exit--and--join game and a strict scalar
Lyapunov function for the exit--and--join dynamics.

In particular, for every admissible exit--and--join transition
$T\to T'=F_i(T,D)$,
\(u_i(D;T)-u_i(C_T(i);T)=\Phi(T')-\Phi(T)=V(T')-V(T)>0\).
\end{theorem}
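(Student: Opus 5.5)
The plan is to compute payoff differences in the induced game $\mathcal G(T)$ directly and match them with surplus differences. I first fix $T\in\Pi(N)$, an agent $i$, and set $C=C_T(i)$. Because acceptance is automatic, $A_i(T)=\{C\}\cup\mathcal D_i(T)$, so every destination is available. Because switching costs vanish, $u_i(D;T)=\Omega_i(v;F_i(T,D))$ for every $D\in A_i(T)$. The stay action gives $u_i(C;T)=\Omega_i(v;T)$, using the convention $F_i(T,C)=T$ and $c_i(T,T)=0$.

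The first step computes the surplus change. Only the coalitions $C$ and $D$ are affected by the move, so identity \eqref{eq:surplus_difference} from the proof of Theorem~\ref{thm:lyapunov_monotonicity} gives
\[
V\bigl(F_i(T,D)\bigr)-V(T)=\bigl[v(D\cup\{i\})-v(D)\bigr]-\bigl[v(C)-v(C\setminus\{i\})\bigr].
\]
The degenerate cases need a quick check. For $D=\varnothing$ we have $v(\varnothing)=0$ and $D\cup\{i\}=\{i\}$. If $C=\{i\}$, the empty remainder is dropped and $v(\varnothing)=0$, so nothing is lost.

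The second step invokes exact marginal alignment (Definition~\ref{def:exact_alignment}). It states that $\Omega_i(v;F_i(T,D))-\Omega_i(v;T)$ equals the same right-hand side. Hence $u_i(D;T)-u_i(C;T)=V(F_i(T,D))-V(T)$ for every action $D$, including the stay action, where both sides are $0$. This is the exact potential property with $\Phi=V$.

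The delicate point is interpretive. In $\mathcal G(T)$ each payoff depends only on the agent's own action, so a potential must be a function of the joint action profile. I will set $\Phi(D_1,\dots,D_n):=\sum_i$ of the unilateral differences $V(F_i(T,D_i))-V(T)$, or equivalently identify $\Phi$ with $V$ evaluated at unilateral deviations, which is the sense the paper intends. Under either reading, any unilateral change of $i$'s action changes $u_i$ and $\Phi$ by the same amount.

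For the Lyapunov claim, an admissible transition is profitable, so $u_i(D;T)-u_i(C;T)>0$, and therefore $V(T')>V(T)$. This yields the displayed chain. Strictness along all admissible transitions then lets Proposition~\ref{prop:no_cycles} apply with $L=V$, so $V$ is a strict Lyapunov function and trajectories terminate in finite time.
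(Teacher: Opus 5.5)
Your proposal is correct and follows essentially the same route as the paper's proof: you compute the two-coalition surplus difference, use exact marginal alignment with zero switching costs to identify $u_i(D;T)-u_i(C_T(i);T)$ with $V(T')-V(T)$, and then use admissibility and finiteness of $\Pi(N)$ to get the strict Lyapunov property. Your checks of the degenerate cases, the stay action, and the explicit profile-level potential $\sum_i\bigl[V(F_i(T,D_i))-V(T)\bigr]$ are sound refinements that the paper leaves implicit.
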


\begin{proof}
Consider an admissible exit--and--join transition \(T\to T'=F_i(T,D)\).
Only two coalitions are affected by the move: the origin coalition
$C_T(i)$ and the destination coalition $D$.
Therefore, the change in coalition surplus is
\begin{equation}
\label{eq:potential_difference}
V(T')-V(T)
=
\bigl[v(D\cup\{i\})-v(D)\bigr]
-
\bigl[v(C_T(i)) - v(C_T(i)\setminus\{i\})\bigr].
\end{equation}

By exact marginal alignment, the deviating agent's Aumann--Dr\`eze payoff change
satisfies \(\Omega_i(v;T')-\Omega_i(v;T)=V(T')-V(T)\).
Since switching costs are zero, the payoff difference in the induced game
$\mathcal G(T)$ coincides with the Aumann--Dr\`eze payoff difference:
\(u_i(D;T)-u_i(C_T(i);T)=\Omega_i(v;T')-\Omega_i(v;T)\).
Combining with \eqref{eq:potential_difference} yields
\(u_i(D;T)-u_i(C_T(i);T)=V(T')-V(T)\),
establishing that $V=\Phi$ is an exact potential function for the induced
exit--and--join game.

Finally, admissibility implies that the deviating agent strictly improves its
payoff. By exact marginal alignment, this is equivalent to $V(T')>V(T)$.
Since $\Pi(N)$ is finite, $V$ is a strict Lyapunov function and guarantees
finite--time convergence of the dynamics to a local maximizer of $V$.
\end{proof}

\begin{theorem}[Exact potential structure for exit--and--join dynamics]
\label{thm:potential_without_convexity}

Consider a transferable--utility cooperative game $v$ and the associated
Aumann--Dr\`eze exit--and--join dynamics with zero switching costs and acceptance.
Suppose there exists a function \(\Phi:\Pi(N)\to\mathbb R\) such that for every
coalition structure $T\in\Pi(N)$, every agent $i\in N$, and every destination
$D\in\mathcal D_i(T)$,
\begin{equation}
\label{eq:exact_potential_condition}
\Omega_i\bigl(v;F_i(T,D)\bigr)-\Omega_i(v;T)
=
\Phi\bigl(F_i(T,D)\bigr)-\Phi(T).
\end{equation}

Then the induced exit--and--join game $\mathcal G(T)$ is an \emph{exact
potential game} with potential function $\Phi$. The function $\Phi$ also
provides an \emph{exact scalar representation of unilateral incentive changes}:
for every admissible exit--and--join transition $T\to F_i(T,D)$,
\(\Omega_i\bigl(v;F_i(T,D)\bigr)>\Omega_i(v;T)\) if and only if
\(\Phi\bigl(F_i(T,D)\bigr)>\Phi(T)\). A coalition structure $T^\star$ is an
exit--and--join equilibrium if and only if it is a pure--strategy Nash
equilibrium of the induced game $\mathcal G(T^\star)$. Along any admissible
exit--and--join transition, the potential strictly increases,
\(\Phi\bigl(F_i(T,D)\bigr)>\Phi(T)\), and the exit--and--join dynamics converge
in finite time to a coalition structure that locally maximizes $\Phi$ with
respect to accepted exit--and--join deviations.

These conclusions hold independently of convexity, superadditivity, or any
efficiency property of the cooperative game $v$.
\end{theorem}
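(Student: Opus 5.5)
The argument is a direct verification against the definitions, with no structural properties of $v$ used. I would treat each of the four claims in the statement in turn.

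\textbf{Exact potential property.} Fix $T$ and an agent $i$, and take any $D\in A_i(T)$. Because switching costs vanish, the payoff is $u_i(D;T)=\Omega_i(v;F_i(T,D))$. The stay action satisfies $F_i(T,C_T(i))=T$, so $u_i(C_T(i);T)=\Omega_i(v;T)$. Hence, by \eqref{eq:exact_potential_condition},
\[
u_i(D;T)-u_i(C_T(i);T)=\Phi\bigl(F_i(T,D)\bigr)-\Phi(T),
\]
which also holds trivially for $D=C_T(i)$. Comparing two actions $D,D'\in A_i(T)$, the common term $\Phi(T)$ cancels. This gives
\[
u_i(D;T)-u_i(D';T)=\Phi\bigl(F_i(T,D)\bigr)-\Phi\bigl(F_i(T,D')\bigr).
\]
So the function $D\mapsto\Phi(F_i(T,D))$ is an exact potential for $\mathcal G(T)$, and each player's payoff depends only on its own action. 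There is one interpretive point to state openly. The potential of the induced game is naturally a function of the action profile, namely $\Phi$ evaluated at the state induced by the deviating player. Since actions are not jointly implemented, I would phrase exactness player by player, in the sense of Monderer--Shapley applied to a game with separable payoffs.

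\textbf{Sign equivalence.} This is immediate, because \eqref{eq:exact_potential_condition} makes the two differences equal. Equal real numbers are positive together.

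\textbf{Equilibrium characterization.} Invoke Proposition~\ref{prop:ej_nash}, or equivalently Proposition~\ref{prop:equilibrium_nash_corrected}. No potential argument is needed for this part.

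\textbf{Monotonicity and convergence.} An admissible transition satisfies $\Omega_i(v;F_i(T,D))>\Omega_i(v;T)$, since $c_i=0$. By the sign equivalence, $\Phi(F_i(T,D))>\Phi(T)$. Then apply Proposition~\ref{prop:no_cycles} with $L=\Phi$. This yields acyclicity and finite termination at a state $T^\star$ with no admissible deviation. At such a state, every accepted deviation $D$ has $\Omega_i(v;F_i(T^\star,D))\le\Omega_i(v;T^\star)$, which means $\Phi(F_i(T^\star,D))\le\Phi(T^\star)$. That is exactly local maximality of $\Phi$ over accepted exit--and--join deviations.

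The independence from convexity follows because no step uses any property of $v$ beyond \eqref{eq:exact_potential_condition}. Example~\ref{ex:potential_nonconvex} illustrates the non-convex case.

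\textbf{Main obstacle.} The main obstacle is making the phrase ``exact potential game'' precise, because the payoffs of $\mathcal G(T)$ are independent across players. I would resolve it as described above, taking $\Phi$ composed with each player's induced transition as the potential. Everything else is bookkeeping.
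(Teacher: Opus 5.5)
Your proposal is correct and follows the paper's proof essentially step by step. With $c_i\equiv 0$, the hypothesis \eqref{eq:exact_potential_condition} becomes equality of unilateral payoff differences and $\Phi$-differences, the sign equivalence is immediate, and strict increase of $\Phi$ on the finite set $\Pi(N)$ (your appeal to Proposition~\ref{prop:no_cycles}) gives termination at a local maximizer. Your departures are refinements rather than a different route: you get the equilibrium equivalence directly from Proposition~\ref{prop:ej_nash} instead of the paper's detour through local maximizers of a potential, you explicitly cover the stay action, and your interpretive concern is fully resolved by $P(D_1,\dots,D_n):=\sum_{j\in N}\Phi\bigl(F_j(T,D_j)\bigr)$, which is a genuine Monderer--Shapley exact potential for the separable game $\mathcal G(T)$.
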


\begin{proof}
We prove each claim in turn.

For the exact potential structure, fix a coalition structure $T\in\Pi(N)$.
In the induced exit--and--join game $\mathcal G(T)$, the payoff of agent $i$ from
choosing destination $D\in\mathcal D_i(T)$ is
\(u_i(D;T)=\Omega_i\bigl(v;F_i(T,D)\bigr)\), since switching costs are zero.
Condition~\eqref{eq:exact_potential_condition} implies that for any two
destinations $D,D'\in\mathcal D_i(T)$,
\(u_i(D';T)-u_i(D;T)=\Phi\bigl(F_i(T,D')\bigr)-\Phi\bigl(F_i(T,D)\bigr)\).
Thus unilateral payoff differences coincide exactly with differences in the
function $\Phi$, and $\mathcal G(T)$ is an exact potential game with potential
function $\Phi$.

The same identity gives a scalar representation of incentives. For any
admissible exit--and--join transition $T\to F_i(T,D)$, admissibility requires
that the deviating agent strictly improves its Aumann--Dr\`eze payoff:
\(\Omega_i\bigl(v;F_i(T,D)\bigr)>\Omega_i(v;T)\).
By~\eqref{eq:exact_potential_condition}, this inequality holds if and only if
\(\Phi\bigl(F_i(T,D)\bigr)>\Phi(T)\).
Hence $\Phi$ provides an exact scalar representation of the deviating agent's
incentive change.

For the equilibrium equivalence, recall that in an exact potential game, a
pure--strategy Nash equilibrium is equivalent to a local maximizer of the
potential function with respect to the feasible unilateral deviations encoded in
the action sets.
By part (i), this condition is equivalent to the absence of admissible
exit--and--join deviations that strictly improve the deviating agent's payoff,
which is precisely the definition of an exit--and--join equilibrium.

Finally, along every admissible exit--and--join transition, the potential $\Phi$
strictly increases.
Since the state space $\Pi(N)$ is finite, no infinite strictly increasing sequence
of potential values can exist.
Therefore, the exit--and--join dynamics converge in finite time to a coalition
structure that locally maximizes $\Phi$ with respect to accepted
exit--and--join deviations.
\end{proof}

\begin{remark}[Ordinal marginal alignment]
\label{rem:ordinal_alignment}

Exact marginal alignment (Definition~\ref{def:exact_alignment}) is a strong
structural requirement: it requires the \emph{magnitude} of an agent's
Aumann--Dr\`eze payoff change under an exit--and--join move to coincide exactly
with the induced change in total coalition surplus.
This condition is sufficient to endow the exit--and--join dynamics with an exact
potential structure, but it is not necessary for convergence or stability.
The weaker ordinal condition in Definition~\ref{def:ordinal_alignment} requires
only that individual incentives and aggregate surplus move in the same direction.
Under this condition, every admissible exit--and--join move that is profitable
for the deviating agent strictly increases total coalition surplus.
Consequently, the coalition--surplus function \(V(T)=\sum_{C\in T} v(C)\)
serves as a (strict) Lyapunov function along admissible trajectories, even
though the induced exit--and--join game need not admit an exact potential
representation.
Thus, exact marginal alignment yields an exact potential game, whereas ordinal
marginal alignment suffices for Lyapunov monotonicity, finite--time convergence,
and exclusion of cycles.
\end{remark}

\begin{remark}[Efficiency versus incentive alignment]
Cooperative convexity and dynamic incentive alignment play different roles in the
analysis. At the cooperative level, convexity of the characteristic function
$v$ means supermodularity: marginal contributions increase with coalition size.
This property implies efficiency results such as superadditivity and optimality
of the grand coalition under the Shapley or Aumann--Dr\`eze values. Cooperative
convexity governs which coalitions are socially efficient, but it is silent about
how such coalitions can be reached through decentralized behavior.
By contrast, exact and ordinal marginal alignment govern the dynamics of
individual deviations. They ensure that myopic exit--and--join moves admit a
monotone scalar representation and hence support convergence arguments, but they
do not require supermodularity of the underlying cooperative game.
These properties are logically distinct. A cooperative game may be convex without
inducing a potential structure for exit--and--join deviations, and an
exit--and--join process may admit an exact potential even when the cooperative
game is submodular. This distinction explains why efficiency and dynamic
implementability need not coincide in decentralized coalition formation.
\end{remark}

\section{Matrix Representation of the Aumann--Dr\`eze Exit--Join Dynamics}
\label{sec:ad_iterative_matrix}

The matrix formulation of the Aumann--Dr\`eze value admits a natural
iterative interpretation, which makes it particularly suitable for
dynamic exit--join coalition formation models.

\subsection{Matrix Form of the Aumann--Dr\`eze Value}

Let \(N=\{1,\dots,n\}\) be the set of players and
\(v:2^N\to\mathbb{R}\) a transferable-utility game with
\(v(\varnothing)=0\).
Fix an ordering of the nonempty coalitions of \(N\), and define the
vectorized game \(\mathrm{vec}(v)\in\mathbb{R}^{2^n-1}\). Let
\(T=\{T_1,\dots,T_m\}\) be a coalition structure on \(N\), and denote
\(t_j:=|T_j|\).
The following block representation is written in coalition-block player order;
an implicit permutation maps the resulting vector back to the canonical ordering
of agents in \(N\).
For each coalition \(T_j\), define the restricted game
\(v_j(S):=v(S)\) for all nonempty \(S\subseteq T_j\), and let
\(\mathrm{vec}(v_j)\in\mathbb{R}^{2^{t_j}-1}\) denote its vectorization. The restriction from \(v\) to \(v_j\) is implemented by a binary matrix
\[
R_{T_j}\in\{0,1\}^{(2^{t_j}-1)\times(2^n-1)},
\qquad
\mathrm{vec}(v_j)=R_{T_j}\,\mathrm{vec}(v),
\]
which selects precisely those coalition values contained in \(T_j\). Let
\(A^{(t_j)}\in\mathbb{R}^{t_j\times(2^{t_j}-1)}\) denote the Shapley value
matrix for a \(t_j\)-player game.
The Aumann--Dr\`eze value of \((v,T)\) admits the linear representation
\begin{equation}
\label{eq:ad_static}
\mathrm{AD}(v,T)
=
\begin{pmatrix}
A^{(t_1)}R_{T_1}\\
\vdots\\
A^{(t_m)}R_{T_m}
\end{pmatrix}
\mathrm{vec}(v)
=
B_T^{\mathrm{AD}}\,\mathrm{vec}(v).
\end{equation}
The operator \(B_T^{\mathrm{AD}}\) is block-diagonal across coalitions,
reflecting the locality of the Aumann--Dr\`eze value.

\subsection{Exit--Join Dynamics as a Matrix System}
\label{subsec:ad_exit_join_dynamics}

Let \(T^k\) denote the coalition structure at iteration \(k\), and define
the payoff vector
\begin{equation}
\label{eq:state}
x^k
=
\mathrm{AD}(v,T^k)
=
B_{T^k}^{\mathrm{AD}}\,\mathrm{vec}(v)
\in\mathbb{R}^n .
\end{equation}

Suppose that at iteration \(k\), an agent \(i\) exits coalition \(T_a^k\)
and joins coalition \(T_b^k\), yielding
\(T_a^{k+1}=T_a^k\setminus\{i\}\) and
\(T_b^{k+1}=T_b^k\cup\{i\}\), with all other coalitions unchanged.

Because \(B_T^{\mathrm{AD}}\) is block-diagonal, only the blocks
associated with \(T_a^k\) and \(T_b^k\) are affected. The operator update
can therefore be written as
\begin{equation}
\label{eq:operator_update}
B_{T^{k+1}}^{\mathrm{AD}}
=
B_{T^k}^{\mathrm{AD}}
+
\Delta B_a^k
+
\Delta B_b^k,
\end{equation}
where \(\Delta B_a^k\) and \(\Delta B_b^k\) replace the corresponding
coalition blocks, and all other block rows are zero.

Multiplying~\eqref{eq:operator_update} by \(\mathrm{vec}(v)\) yields the
payoff dynamics
\begin{equation}
\label{eq:payoff_update}
x^{k+1}
=
x^k
+
\left(\Delta B_a^k+\Delta B_b^k\right)\mathrm{vec}(v).
\end{equation}

Equivalently, for each player \(\ell\),
\[
x_\ell^{k+1}
=
\begin{cases}
\Phi_\ell\!\left(v|_{T_a^{k+1}}\right),
& \ell\in T_a^{k+1},\\[4pt]
\Phi_\ell\!\left(v|_{T_b^{k+1}}\right),
& \ell\in T_b^{k+1},\\[4pt]
x_\ell^{k},
& \text{otherwise},
\end{cases}
\]
where \(\Phi_\ell(w)\) denotes the Shapley value component of player
\(\ell\) in the game \(w\).

\subsection{Event--Triggered Switching Interpretation}
\label{subsec:event_triggered_ad}

The update~\eqref{eq:payoff_update} defines a discrete-time,
state-dependent switched linear system.
An exit--join move \((a,b,i)\) is \emph{admissible} if it yields a strict
payoff improvement for agent \(i\),
\begin{equation}
\label{eq:trigger}
\Phi_i\!\left(v|_{T_b^k\cup\{i\}}\right)
>
\Phi_i\!\left(v|_{T_a^k}\right).
\end{equation}

Let \(\mathcal{S}(x^k)\) denote the set of admissible exit--join moves at
state \(x^k\).
The system evolves according to
\[
x^{k+1}
=
\begin{cases}
x^k
+
\left(\Delta B_a^k+\Delta B_b^k\right)\mathrm{vec}(v),
& \text{if } \mathcal{S}(x^k)\neq\varnothing,\\[6pt]
x^k,
& \text{otherwise}.
\end{cases}
\]

If \(\mathcal{S}(x^k)=\varnothing\), the dynamics terminate at a
partition \(T^\star\) such that no agent can profitably deviate via an
exit--join move under the Aumann--Dr\`eze value.

\begin{remark}
The event-triggered formulation relies critically on the block-diagonal
structure of \(B_T^{\mathrm{AD}}\).
Values based on quotient games, such as the Owen value, do not admit a
comparable local switching representation, since a single exit alters
the entire aggregation operator.
\end{remark}

\section{Numerical Experiments}
\label{sec:numerical_experiments}

This section reports a broader numerical study of the Aumann--Dr\`eze
exit--and--join dynamics. The experiments test finite termination, monotonicity
of the coalition-surplus Lyapunov function, sensitivity to switching and
acceptance costs, and behavior in a special convex game. All figures and tables
are generated by a reproducible Python script in the repository.

We use clustered pairwise transferable-utility games with
\[
v(S)=\sum_{i\in S}a_i+\sum_{\{i,j\}\subseteq S}w_{ij}.
\]
For such games, the Aumann--Dr\`eze payoff of agent $i$ in coalition $C$ is
\[
\Omega_i(v;C)=a_i+\frac12\sum_{j\in C\setminus\{i\}}w_{ij}.
\]
The sign of the moving agent's payoff change is therefore aligned with the sign
of the coalition-surplus change. This class provides a controlled testbed for
the Lyapunov results while still allowing heterogeneous attraction and repulsion
across agents.
Weights are drawn from a clustered random model: within-cluster weights are
positive on average, while cross-cluster weights are mixed and may be negative.
Initial partitions are deliberately fragmented, active agents are sampled in
random order, and a move is executed only if it strictly improves the moving
agent's net Aumann--Dr\`eze payoff and is accepted by every member of the
destination coalition. The baseline acceptance rule uses \(\kappa=0\). In the
costly-admission experiments, a nonempty destination accepts a mover only when
each incumbent's payoff gain from admitting that mover is at least
\(\kappa\), which is \(w_{i\ell}/2\ge\kappa\) in the pairwise games.

As a representative trajectory, we use $n=30$ agents, five latent clusters, and
switching cost $c=0.05$. Starting from a fragmented partition, the process
executes 31 accepted exit--and--join moves, increases coalition surplus by
62.67, and terminates at a partition with seven coalitions. Figure
\ref{fig:representative_trajectory} shows both the membership trajectory and
the Lyapunov trace, with ten repeated terminal checks appended after the final
accepted move. The coalition labels are constant over this terminal tail, while
the scalar surplus trajectory is strictly increasing before termination and flat
after no admissible deviation remains.

\begin{figure}[t]
\centering
\begin{minipage}{0.48\linewidth}
\centering
\includegraphics[width=\linewidth]{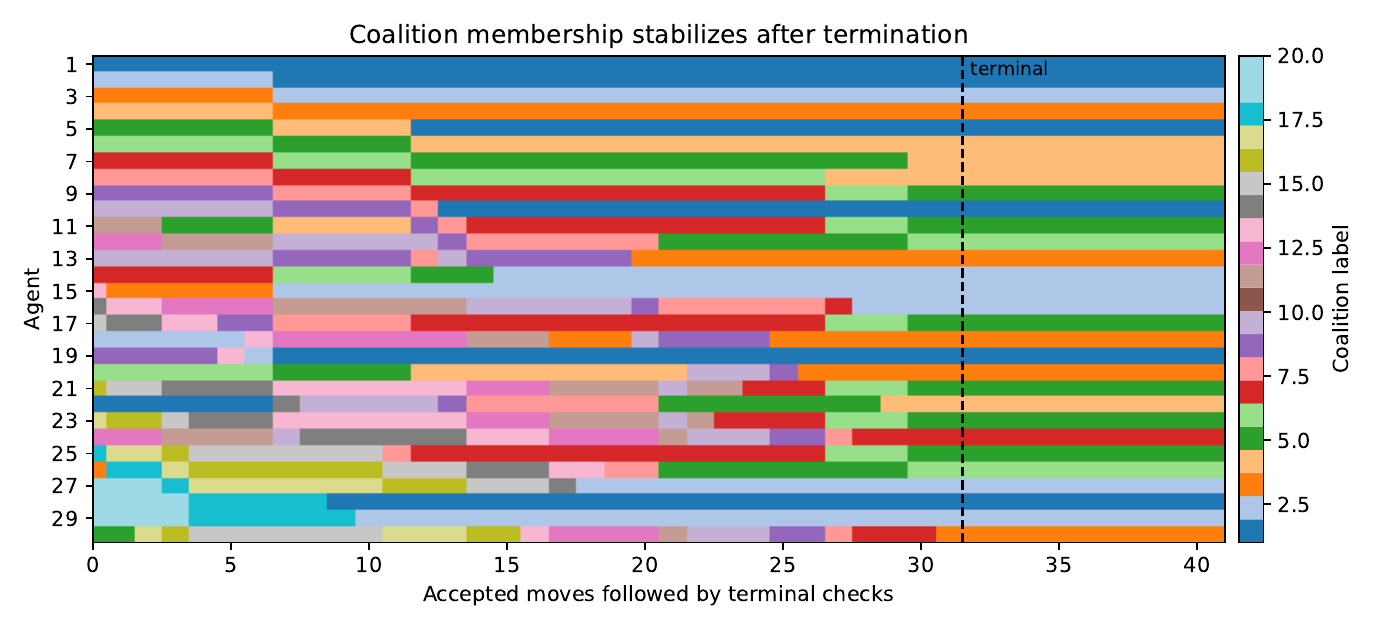}
\end{minipage}
\hfill
\begin{minipage}{0.48\linewidth}
\centering
\includegraphics[width=\linewidth]{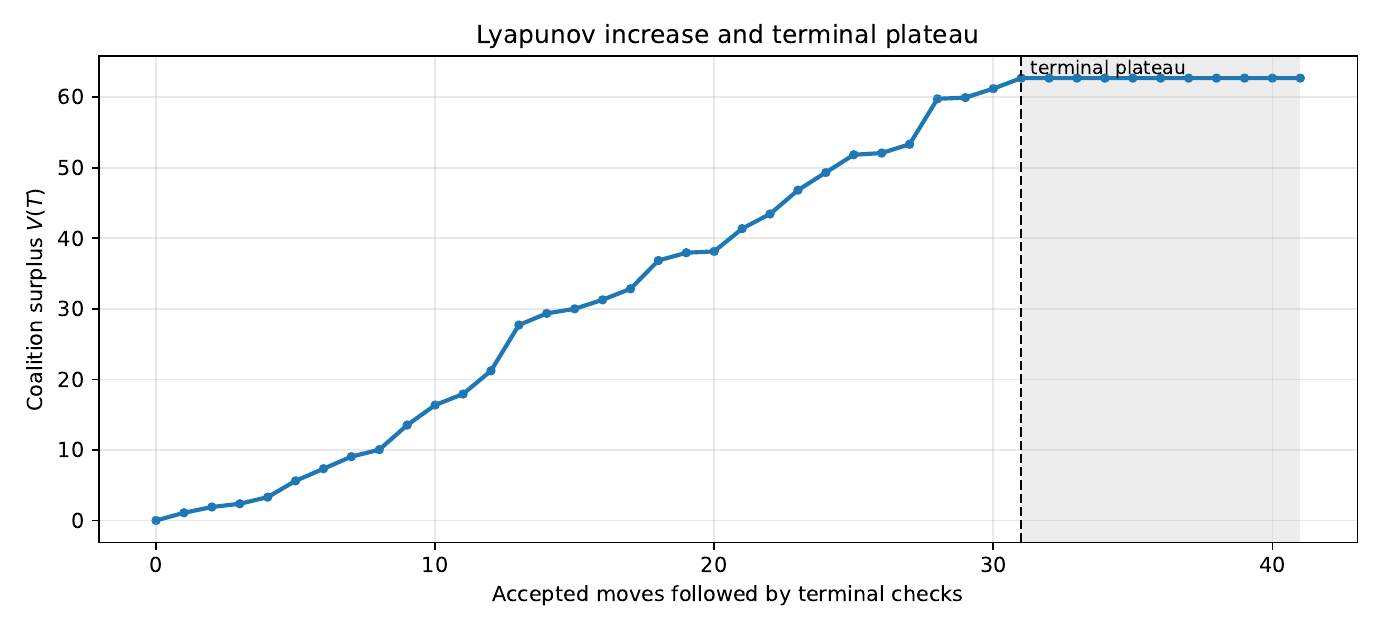}
\end{minipage}
\caption{Representative Aumann--Dr\`eze exit--and--join trajectory. Left:
coalition membership over accepted moves followed by terminal checks. Right:
coalition-surplus Lyapunov function along the same trajectory. The dashed marker
indicates the first terminal state; the flat tail makes convergence visible.}
\label{fig:representative_trajectory}
\end{figure}

To test robustness, we simulate 180 independently generated games with $n=24$
agents and four latent clusters at switching cost $c=0.05$. Every run terminates
at an exit--and--join equilibrium, and no run exhibits a violation of Lyapunov
monotonicity. The mean number of accepted moves is 21.9, the median is 22, and
the 90th percentile is 25.1. The mean surplus gain is 52.64, and terminal
partitions contain 4.9 coalitions on average. Figure~\ref{fig:monte_carlo}
summarizes the empirical distribution of termination times, surplus gains, and
terminal partition sizes.

\begin{figure}[t]
\centering
\includegraphics[width=\linewidth]{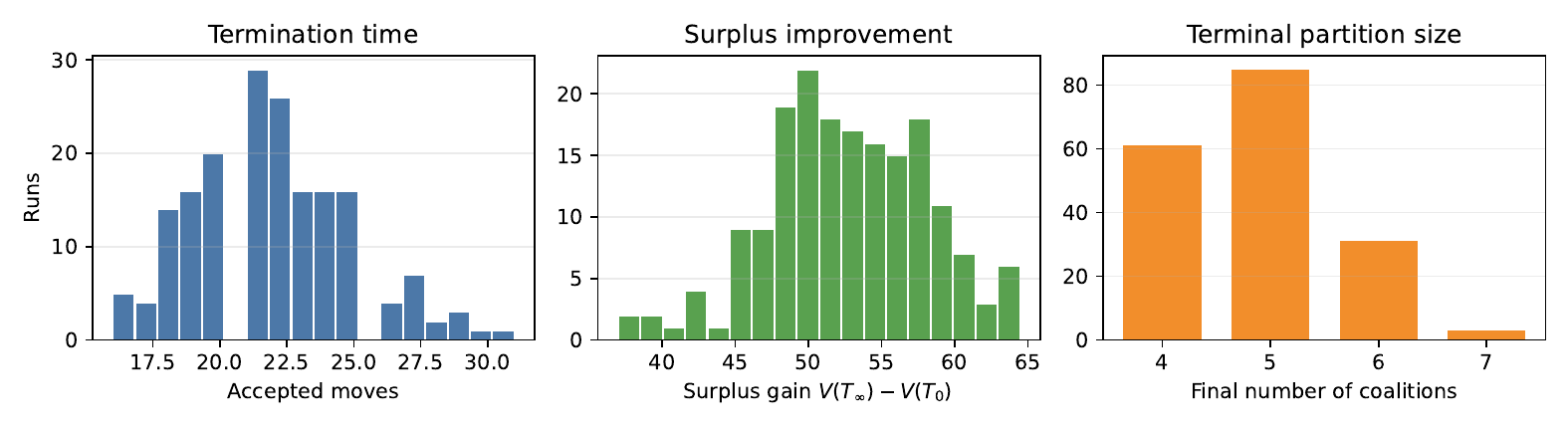}
\caption{Monte Carlo summary over 180 random clustered pairwise games. All runs
terminate with zero observed Lyapunov monotonicity violations.}
\label{fig:monte_carlo}
\end{figure}

We next vary the switching cost while holding the random game instances and
initial partitions fixed across cost levels. This paired design isolates the
effect of mobility frictions. As shown in Figure~\ref{fig:switching_cost_sweep}
and Table~\ref{tab:numerical_diagnostics}, low to moderate costs preserve most
of the surplus improvement, while high costs sharply reduce mobility and leave
the terminal partition more fragmented. At $c=1.50$, the mean number of accepted
moves falls to 2.3 and the mean terminal number of coalitions rises to 11.7,
compared with 22.9 moves and 4.9 coalitions at zero cost.

\begin{figure}[t]
\centering
\includegraphics[width=\linewidth]{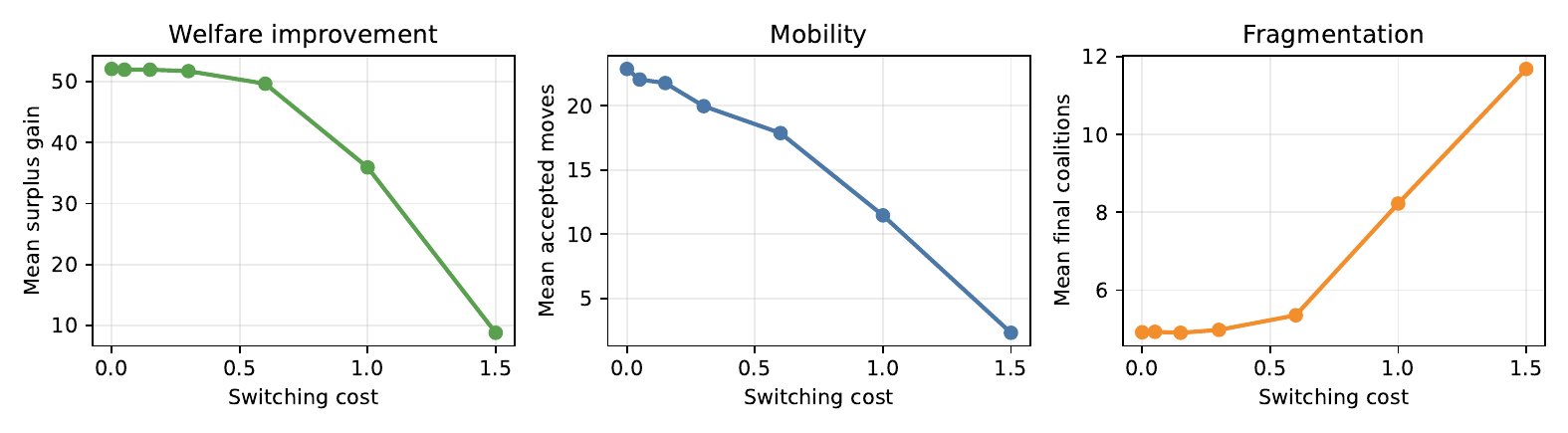}
\caption{Sensitivity to switching costs under a paired Monte Carlo design.
Higher costs reduce accepted mobility, reduce realized surplus gains, and
increase terminal fragmentation.}
\label{fig:switching_cost_sweep}
\end{figure}

We also vary the acceptance cost \(\kappa\) while holding the switching cost
fixed at \(c=0.05\) and reusing the same paired-game design. Acceptance costs
act on the destination coalition rather than on the moving agent: a higher
\(\kappa\) makes incumbents more selective about admitting a newcomer. As shown
in Figure~\ref{fig:acceptance_cost_sweep}, this admission friction has a visible
effect even though all runs still terminate and preserve Lyapunov monotonicity.
When \(\kappa\) increases from \(0\) to \(0.25\), the mean surplus gain falls
from \(53.16\) to \(35.90\), the mean number of accepted moves falls from
\(21.8\) to \(18.4\), and the mean terminal number of coalitions rises from
\(4.7\) to \(8.2\). Figure~\ref{fig:acceptance_switching_heatmap} reports the
joint effect of switching and acceptance costs; the high-cost region leaves the
process more fragmented because fewer profitable moves pass both the mover's
net-gain test and the destination's admission test.

\begin{figure}[t]
\centering
\includegraphics[width=\linewidth]{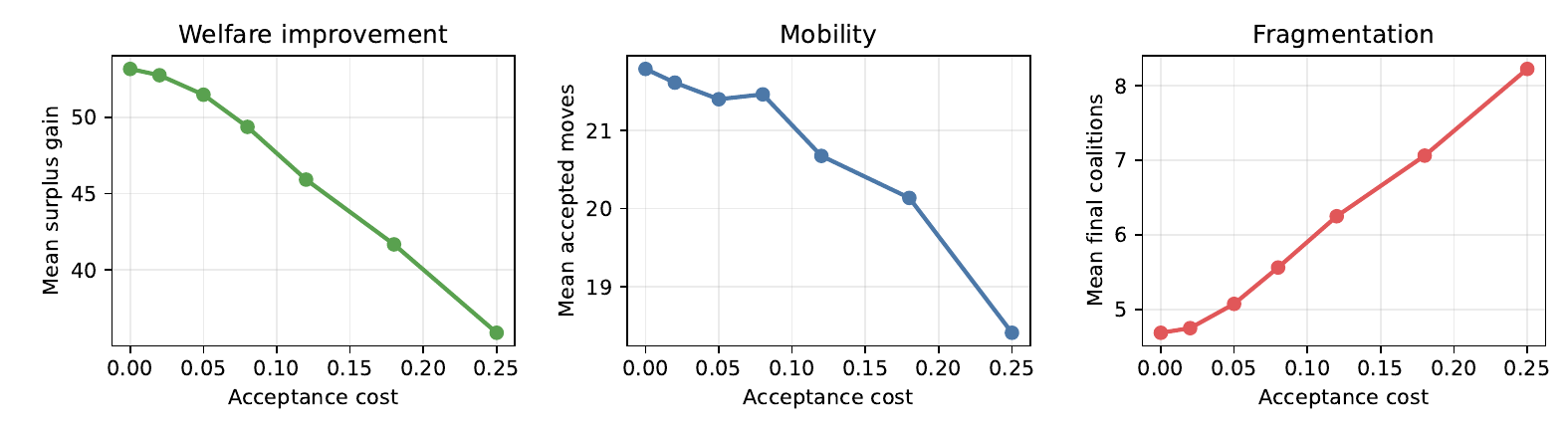}
\caption{Sensitivity to acceptance costs at switching cost \(c=0.05\). Higher
acceptance costs make destination coalitions more selective, reducing realized
surplus gains and increasing terminal fragmentation.}
\label{fig:acceptance_cost_sweep}
\end{figure}

\begin{figure}[t]
\centering
\includegraphics[width=0.92\linewidth]{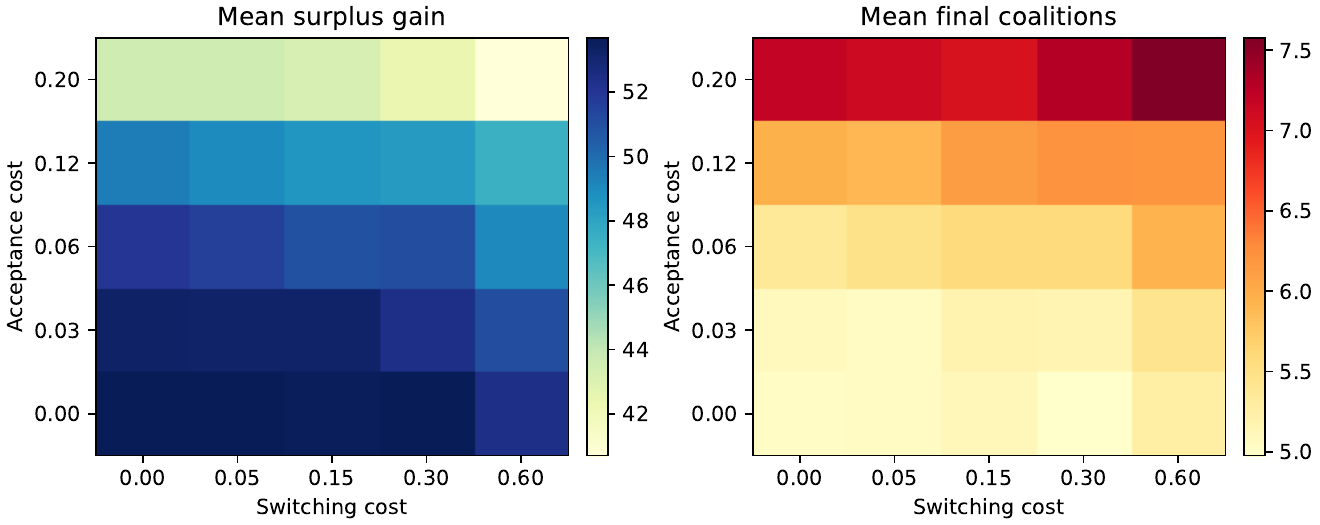}
\caption{Joint sensitivity to switching and acceptance costs under a paired
Monte Carlo design. The left panel reports mean surplus gains, and the right
panel reports mean final coalition counts.}
\label{fig:acceptance_switching_heatmap}
\end{figure}

\input{numerical_results_table.tex}

Finally, we isolate a special convex game for which the grand
coalition is both efficient and dynamically selected from a singleton initial
partition. Let \(v(S)=0.45\binom{|S|}{2}\). This symmetric game is convex
because marginal contributions increase with coalition size. With switching
cost \(c=0.05\) and acceptance cost \(\kappa=0.02\), every accepted move joins a
larger coalition, incumbents strictly prefer admission, and the process reaches
the grand coalition after 17 accepted moves. Figure~\ref{fig:convex_game_example}
shows the number of coalitions decreasing to one and the coalition-surplus
trajectory reaching the grand-coalition value \(v(N)=68.85\). This benchmark
does not overturn the earlier nonuniqueness example; rather, it shows that
strict complementarity in a convex game can make the efficient coalition
structure dynamically attractive under the exit--and--join protocol.

\begin{figure}[t]
\centering
\includegraphics[width=0.92\linewidth]{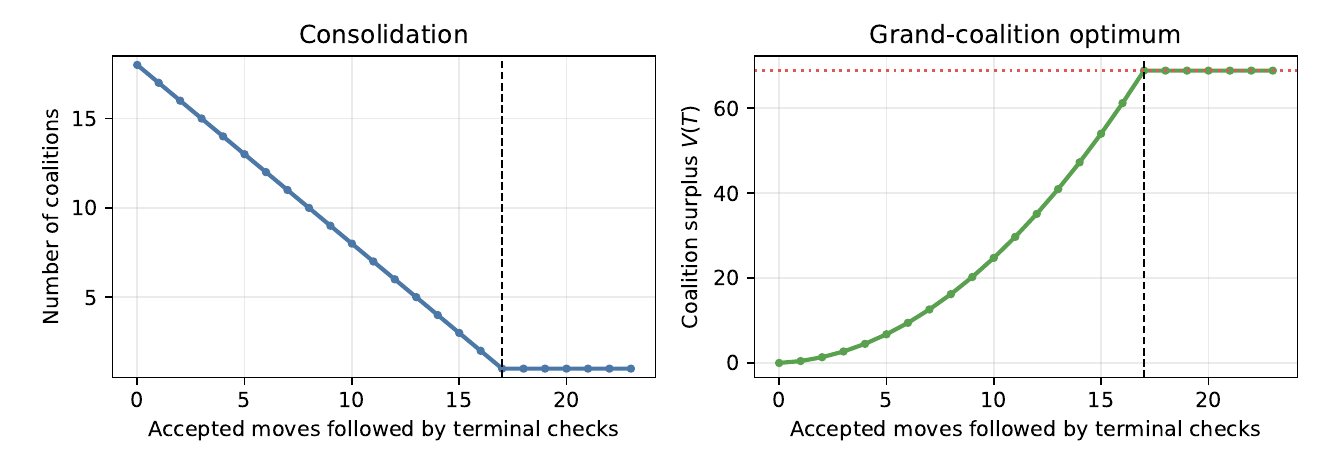}
\caption{Special convex benchmark \(v(S)=0.45\binom{|S|}{2}\). Starting from
singletons, the exit--and--join dynamics reach the grand coalition and attain
the grand-coalition surplus.}
\label{fig:convex_game_example}
\end{figure}

Taken together, the experiments support the theoretical picture. In aligned
pairwise games, accepted Aumann--Dr\`eze exit--and--join moves produce monotone
surplus improvement and finite termination across a large set of randomized
instances. Switching costs reduce the moving agent's net incentive to deviate,
while acceptance costs make destination coalitions more selective; both
frictions can stabilize more fragmented terminal partitions. The convex
benchmark complements the randomized experiments by showing a case in which
strong positive complementarities select the grand coalition through local
accepted moves.

\section{Conclusion and Extensions}
\label{sec:conclusion}

This paper developed a decentralized model of coalition formation driven by
local exit--and--join decisions. By using the Aumann--Dr\`eze value as the payoff
rule, the model preserves the cooperative interpretation of within-coalition
surplus allocation while allowing coalition structures to evolve through
noncooperative unilateral deviations. The resulting equilibrium concept is a
local stability condition: a terminal partition admits no accepted
exit--and--join move that strictly improves the deviating agent's net payoff.

The analysis separates three issues that are often conflated in coalition
formation models. Cooperative convexity supports efficiency of the grand
coalition, but does not by itself guarantee that decentralized exit--and--join
dynamics will select it. Acceptance rules provide local protection for members
of destination coalitions, but global convergence requires a no-cycle
certificate. Exact or ordinal marginal alignment supplies such a certificate by
turning unilateral incentives into a scalar potential or Lyapunov function.
The numerical experiments show that mover-side switching costs and
destination-side acceptance costs stabilize different forms of local
organization, and that a convex game with strict complementarities can select
the grand coalition through local accepted moves.

Several extensions are natural. One direction is to study strategic admission
policies in which incumbent coalitions use transfers, asymmetric sharing rules,
or switching costs to attract entrants and deter departures. Such mechanisms can
generate lock-in or monopoly-like outcomes even when the underlying surplus
function is not globally efficient. Another direction is to combine the present
model with learning, so that agents estimate coalition values from partial
observations rather than observing the relevant payoff comparisons directly.
Both directions preserve the central theme of the paper: coalition structure is
not imposed by a planner, but emerges from local incentives interacting with
coalition-level rules.

%% file: numerical_results_table.tex
\begin{table}[t]
\centering
\small
\caption{Numerical diagnostics for Aumann--Dr\`eze exit--and--join dynamics. Monte Carlo results use 180 random clustered pairwise games; the switching- and acceptance-cost sweeps use 80 paired random games per cost level.}
\label{tab:numerical_diagnostics}
\resizebox{\linewidth}{!}{%
\begin{tabular}{@{}lrrrrrr@{}}
\toprule
Experiment & Runs & Switching cost & Acceptance cost & Mean moves & Mean surplus gain & Mean final coalitions \\
\midrule
Representative trajectory & 1 & 0.05 & 0.00 & 31 & 62.67 & 7.0 \\
Monte Carlo ensemble & 180 & 0.05 & 0.00 & 21.9 & 52.64 & 4.9 \\
\midrule
Switching-cost sweep & 80 & 0.00 & 0.00 & 22.9 & 52.05 & 4.9 \\
Switching-cost sweep & 80 & 0.05 & 0.00 & 22.1 & 51.93 & 4.9 \\
Switching-cost sweep & 80 & 0.15 & 0.00 & 21.8 & 51.93 & 4.9 \\
Switching-cost sweep & 80 & 0.30 & 0.00 & 20.0 & 51.69 & 5.0 \\
Switching-cost sweep & 80 & 0.60 & 0.00 & 17.9 & 49.62 & 5.3 \\
Switching-cost sweep & 80 & 1.00 & 0.00 & 11.5 & 35.91 & 8.2 \\
Switching-cost sweep & 80 & 1.50 & 0.00 & 2.3 & 8.80 & 11.7 \\
\midrule
Acceptance-cost sweep & 80 & 0.05 & 0.00 & 21.8 & 53.16 & 4.7 \\
Acceptance-cost sweep & 80 & 0.05 & 0.02 & 21.6 & 52.75 & 4.8 \\
Acceptance-cost sweep & 80 & 0.05 & 0.05 & 21.4 & 51.47 & 5.1 \\
Acceptance-cost sweep & 80 & 0.05 & 0.08 & 21.5 & 49.36 & 5.6 \\
Acceptance-cost sweep & 80 & 0.05 & 0.12 & 20.7 & 45.92 & 6.2 \\
Acceptance-cost sweep & 80 & 0.05 & 0.18 & 20.1 & 41.68 & 7.1 \\
Acceptance-cost sweep & 80 & 0.05 & 0.25 & 18.4 & 35.90 & 8.2 \\
\midrule
Convex benchmark & 1 & 0.05 & 0.02 & 17 & 68.85 & 1.0 \\
\bottomrule
\end{tabular}
}
\begin{minipage}{0.96\linewidth}\footnotesize All Monte Carlo runs in the reported randomized sweeps terminated with zero Lyapunov monotonicity violations. The convex benchmark uses the symmetric convex game $v(S)=0.45\binom{|S|}{2}$ and reaches the grand coalition.\end{minipage}
\end{table}